\documentclass{article}

 \usepackage[preprint, nonatbib]{neurips_2026}

\usepackage[square,numbers]{natbib}
\usepackage[utf8]{inputenc}
\usepackage{amsmath,amssymb,amsthm}
\usepackage{mathtools}
\usepackage{booktabs}
\usepackage{hyperref}
\usepackage{amsthm}

\usepackage{algorithm}
\usepackage{algorithmic}
\usepackage[utf8]{inputenc} 
\usepackage[T1]{fontenc}    
\usepackage{hyperref}       
\usepackage{url}            
\usepackage{booktabs}       
\usepackage{amsfonts}       
\usepackage{nicefrac}       
\usepackage{microtype}      
\usepackage{xcolor}         

\newtheorem{theorem}{Theorem}[section]
\newtheorem{lemma}[theorem]{Lemma}

\usepackage{graphicx}
\usepackage{multirow}
\usepackage{amsmath}
\usepackage{amssymb}
\usepackage{dsfont}
\usepackage{subcaption}
\DeclareMathOperator*{\argmax}{arg\,max}

\usepackage{wrapfig}

\usepackage{enumitem}
\usepackage{tcolorbox}
\usepackage{titlesec}
\tcbuselibrary{breakable, listings, skins}

\newtcblisting{promptbox}[2][blue]{%
  enhanced,
  breakable,
  toprule at break=0pt,
  bottomrule at break=0pt,
  pad at break=2mm,
  colback=#1!10,            
  colframe=#1!50!black,    
  colbacktitle=#1!30!white,
  boxrule=0.5pt,
  arc=0mm,
  left=2mm, right=2mm,
  top=-1mm, bottom=1mm,
  colbacktitle=#1!30!white,
  coltitle=black,
  fonttitle=\bfseries,
  toptitle=1mm,      
  bottomtitle=1mm, 
  title=#2,
  listing only,
  listing options={
    breaklines=true,
    breakatwhitespace=true,
    basicstyle=\ttfamily\small,
    columns=fullflexible
  }
}

\definecolor{softgreen}{rgb}{0, 0.6, 0.0}

\title{Budget-Efficient Automatic Algorithm Design via Code Graph}

\author{Maxime Bouscary\\
Operations Research Center\\
Massachusetts Institute of Technology\\
\texttt{mbscry@mit.edu}
\And
Manxi Wu \\
Department of Civil and Environmental Engineering \\
University of California, Berkeley \\
\texttt{manxiwu@berkeley.edu}
\And
Saurabh Amin \\
Laboratory for Information \& Decision Systems \\
Massachusetts Institute of Technology\\
\texttt{amins@mit.edu}
}

\begin{document}

\maketitle

\begin{abstract}
Large language models (LLMs) have emerged as powerful tools for automatic algorithm design (AAD). However, existing pipelines remain inefficient. They operate at the granularity of full algorithms, redundantly rewriting recurring substructures and discarding low-fitness candidates that may contain valuable algorithmic features. We formalize \emph{budget-efficient automatic algorithm design}, wherein the search policy maximizes realized fitness subject to limited computational cost. We propose a directed acyclic graph representation of algorithms and build a search framework that fully exploits the LLM's output. Instead of querying the LLM for full algorithms, we use it to obtain corrections: compact operators that add, replace, or remove code blocks. Each correction augments the graph, yielding new algorithms that compose with prior corrections. This graph structure decomposes algorithms into sets of corrections, enabling correction-level credit assignment that informs subsequent queries. We complement this framework with theoretical insights into the ideal balance between search depth and breadth at different budget levels. We validate our method empirically on three combinatorial optimization problems, demonstrating consistent superiority of our graph-based search over full-algorithm search at equal token budget. Finally, our experiments suggest that rich contexts help only when the LLM's prior knowledge is shallow, and can hinder performance otherwise.
\end{abstract}

\section{Introduction}

Automatic algorithm design (AAD) has transitioned from classical symbolic search to a paradigm where large language models (LLMs) serve as general-purpose search operators. While this shift has enabled notable successes in mathematical discovery \citep{romera2024mathematical, novikov2025alphaevolve}, heuristic design \citep{liu2024evolution, ye2024reevo, zheng2025monte}, and domain-specific algorithm engineering \citep{yazdani2025evocut, surina2025algorithm}, current pipelines remain remarkably wasteful, often requiring thousands or even millions of LLM queries. Existing methods typically generate a single, full algorithm per query. This full-algorithm paradigm gives rise to two fundamental limitations. 

First, generated algorithms are redundant: generated candidates share substantial substructure, yet recurring subroutines are rewritten from scratch in every query. Problems admitting short algorithms, such as bin packing or circle packing, are largely shielded from this overhead, but more complex combinatorial problems yield algorithms spanning hundreds of lines of code. The full-rewrite approach is not only computationally expensive but also error-prone, as it increases the likelihood of introducing syntactical or logical mistakes in otherwise valid code blocks \citep{wang2024large, dou2026wrong, cheng2026diff}.

Second, the resulting algorithms are opaque: the relations between algorithms are only implicit, and natural-language descriptions of algorithms are typically too coarse-grained to capture the main drivers of performance. The full-algorithm paradigm thus offers no principled way to extract the most impactful algorithmic features, and the LLM is forced to rely on costly and inconsistent natural-language reflections. These inefficiencies translate directly into wasted tokens at a time when LLM inference has become the central cost of AAD.

Such waste motivates a budget-efficient formulation of AAD that seeks to maximize fitness subject to an explicit cap on cumulative inference costs. This budget-efficient view exposes a fundamental trade-off that is largely implicit in prior work: while richer LLM queries (e.g., extensive context, structured feedback, or in-context examples) yield higher-quality candidate algorithms, they rapidly deplete the available budget, restricting the overall depth or breadth of the search. To the best of our knowledge, no search architecture has been explicitly designed to navigate this trade-off and optimize performance under strict cost constraints.

We introduce a search framework that addresses these challenges by navigating the algorithm space through a directed acyclic graph (DAG), where source-to-sink paths represent complete algorithms. Rather than requesting full programs, the system queries the LLM for concise corrections that add, replace, or remove specific code blocks. This approach allows a single query to return multiple corrections that can be composed with those of previous iterations, significantly increasing the set of accessible algorithms without additional LLM queries. Additionally, each algorithm is composed of an explicit set of corrections, which enables correction-level credit assignment. We use this granular information to construct structured feedback that guides subsequent iterations toward high-performing algorithms.

Our main contributions are as follows.
\begin{itemize}
    \item We formalize \emph{budget-efficient automatic algorithm design}~\eqref{eq:bcaad} and propose a graph-based representation of algorithms as source-to-sink paths in a graph of code blocks. Building on this representation, we introduce two budget-efficient AAD methods: a context-agnostic variant, and a context-guided variant that additionally conditions the LLM on a summary of correction credit estimates.
    \item We provide theoretical insights on configuring AAD frameworks under budget constraints, focusing on the trade-off between the depth and breadth of search. Under assumptions on the mean and dispersion of search trajectories, we show that (i) the set of locally optimal per-run budgets is non-decreasing in the total budget, with sufficient conditions for uniqueness; (ii) for sufficiently large budgets, the optimal search strictly balances depth and breadth, ruling out both pure zero-shot sampling and a single exhaustive run; (iii) this balance favors depth for policies whose expected fitness grows faster with the per-run budget.
    \item We empirically evaluate our framework on three combinatorial optimization problems with varying degrees of pre-training exposure. The graph-based search consistently dominates algorithm-based search at equal token budget. Further, our experiments reveal the importance of in-context learning when the LLM's prior is weak, while similar guidance can hurt performance when the prior is strong.
\end{itemize}

\section{Related Work}
 
LLM-based AAD has evolved around a few recurring architectural patterns. We organize this line of research into three categories: evolutionary methods, tree-search methods, and methods that fine-tune the underlying LLM.
 
\paragraph{Evolutionary methods.}
The dominant framework couples an LLM generator with a population-based selection. Candidate algorithms are sampled, evaluated on a fixed set of instances, and the best ones are kept to seed the next generation. FunSearch~\citep{romera2024mathematical} popularized this framework for mathematical discovery, and EoH~\citep{liu2024evolution, yao2025multi} refined it for general heuristic design by incorporating natural-language descriptions of heuristics into the generation prompts. Subsequent work has extended the loop along several axes: ReEvo~\citep{ye2024reevo} adds reflective self-critique between generations, EvoCut~\citep{yazdani2025evocut} targets cutting-plane generation for integer programming, and EvoTune~\citep{surina2025algorithm} and TIDE~\citep{chen2026tide} integrate online tuning of the generator into the evolutionary loop. These methods share a common limitation: each generation produces full algorithms, and population diversity is maintained only through prompt-level mechanisms, leaving the search prone to paradigm collapse when the LLM's prior is concentrated around a few algorithmic features. AlphaEvolve~\citep{novikov2025alphaevolve} and OpenEvolve~\citep{openevolve} operate at the level of code edits rather than full algorithms, substantially reducing inference cost on long programs. However, all edits are applied simultaneously and directly to their target algorithm, and only the resulting full algorithm is stored as a new candidate. In contrast, our corrections are toggleable and reusable, allowing us to yield a greater number of algorithms per LLM query.
 
\paragraph{Tree-search methods.}
A second popular approach organizes the algorithms produced during the search into a tree, where children represent LLM-proposed refinements of their parent. MCTS-AHD \citep{zheng2025monte} and Planning of Heuristics \citep{wang2025planning} apply Monte Carlo tree search to balance the exploitation of high-fitness branches against the exploration of underperforming ones. Similarly, CodeTree \citep{li2025codetree} uses agent-guided tree expansion for general code generation, while \citet{astorga2024autoformulation} tailors tree search to optimization model formulation. These methods preserve the full search history and avoid paradigm collapse, but each node remains a complete algorithm, and the difference between a parent and its children is obscure, making it impossible to isolate beneficial algorithmic features. Our graph structure overcomes this by making the corrections present in any algorithm explicit and observable. This enables correction-level credit assignment to better guide the search, a granularity unavailable when the unit of analysis is a complete algorithm.
 
\paragraph{Fine-tuning and data augmentation.}
A third thread focuses on improving the LLM itself rather than the search procedure. ORLM~\citep{huang2025orlm} and OptiMind~\citep{chen2025optimind} fine-tune LLMs on curated optimization-domain corpora, MILP-Evolve~\citep{li2024towards} generates a large, diverse set of MILP instances to strengthen LLM performance on this problem class, and \citet{lima2025toward} addresses the scarcity of high-quality training data through verifiable synthetic generation. These approaches complement our work, as a stronger generator inherently improves the search procedures built upon it. Building on these efforts, we demonstrate that the optimal, budget-efficient search strategy depends on the LLM's prior strength: extensive search methods compensate for weak priors but often undermine the effectiveness of strong ones.



\section{Problem Setting}

Let $\mathcal{P}$ be a combinatorial optimization problem defined over an instance space $\mathcal{Z}$ and a solution space $\mathcal{Y}$, with instances drawn from a distribution $\mathcal{D}_{\mathcal{P}}$ over $\mathcal{Z}$. Let $\mathcal{A}$ denote the space of algorithms $a: \mathcal{Z} \to \mathcal{Y}$, and let $f: \mathcal{Z} \times \mathcal{Y} \to \mathbb{R}$ be a scoring function. The fitness of an algorithm $a \in \mathcal{A}$, where higher values indicate better performance, is:
\begin{equation*}
    V(a) \triangleq \mathbb{E}_{z \sim \mathcal{D}_{\mathcal{P}}}\left[f(z, a(z))\right].
\end{equation*}
Since $\mathcal{D}_{\mathcal{P}}$ is typically unavailable, $V$ is estimated by the empirical mean:
\begin{equation*}
    \widehat{V}(a) \triangleq \frac{1}{m}\sum_{i=1}^m f(z_i, a(z_i)),
    \qquad z_1,\dots,z_m \sim\mathcal{D}_{\mathcal{P}},
\end{equation*}
computed on a fixed training set of $m$ instances.

Algorithms are produced by a pre-trained LLM acting as a stochastic generator. Let $\mathcal{X}$ denote the set of all possible sequences over the LLM’s discrete token vocabulary. Given a context $x \in \mathcal{X}$, the generator induces a conditional distribution $L(\cdot \mid x)$ over finite, non-empty subsets of $\mathcal{A}$. At each iteration $t = 1, 2, \dots$, a search policy samples a context $x_t \in \mathcal{X}$, prompting the generator to yield a stochastic set of candidate algorithms $A_t \sim L(\cdot \mid x_t)$. Each generated algorithm $a \in A_t$ is evaluated to obtain its empirical fitness $\widehat{V}(a)$. We let:
\begin{equation*}
     h_t \triangleq \bigcup_{s=1}^{t-1} \left\{\left(a, \widehat{V}(a)\right) \mid a \in A_s\right\}
\end{equation*}
be the search history at iteration $t$, and denote by $\mathcal{H}$ the space of all search histories. A search policy is a mapping $\pi: \mathcal{H} \to \Delta(\mathcal{X})$ returning a probability distribution over contexts given a search history, with $\Pi$ being the set of all such search policies.

Each iteration incurs a generation cost $c: \mathcal{X} \to \mathbb{R}_{>0}$, with $c(x) \geq c_0 > 0$, capturing that certain contexts induce higher computational costs. Given a total budget $B \geq c_0$, the \emph{Budget-Efficient Automatic Algorithm Design} problem is:
\begin{equation}
    \label{eq:bcaad}
    \pi^{\star}(B) \in \argmax_{\pi \in \Pi} \mathbb{E}_{\pi, L} \left[ z(\pi, B) \right]
\end{equation}
with
\begin{equation*}
    z(\pi, B) \triangleq  \max_{t \in [T^\pi], a \in A_t^\pi} \widehat{V}(a) \quad \text{and} \quad T^\pi \triangleq \max \left\{ s \in \mathbb{N} : \sum_{t=1}^{s} c\left(x_t^\pi\right) \leq B \right\},
\end{equation*}
where $z(\pi, B)$ is a random variable representing the maximum empirical fitness achieved across all algorithms generated under the trajectory induced by policy $\pi$ constrained by budget $B$.

\section{Graph-Based Search} \label{sec:graph-based-search}
 
We propose a search framework that navigates the algorithm space $\mathcal{A}$ by evolving a directed acyclic graph $G_t$. In this graph, edges represent code blocks, and source-to-sink paths represent complete algorithms. The graph is augmented by \emph{corrections}: modifications that add, replace, or remove code blocks, which translate into new sub-paths in $G_t$ enlarging the set of accessible algorithms.
 
Our search framework offers two major advantages:
\begin{itemize}
    \item Higher query yield. Standard methods return one algorithm per query. Our approach returns $m \geq 1$ corrections for a similar cost, augmenting the graph $m$ times. With $M_t$ being the current number of paths in $G_t$, this results in $k_t \in [m, M_t \cdot 2^m]$ new paths per LLM query. In the most conservative scenario where overlapping and incompatible corrections limit the yield to $k_t = m$, our method maintains an $m$-fold increase in efficiency.
    \item Explicit Credit Assignment. Since algorithms correspond to paths in $G_t$, the specific corrections composing them are explicit. This allows us to estimate each correction's contribution to the fitness and use this information to steer the LLM toward high-impact features via in-context learning rather than relying solely on the LLM's prior.
\end{itemize}

\subsection{Graph Code Representation}
 
We maintain a directed acyclic graph $G_t = (V_t, E_t)$ with a unique source and sink, and let a \emph{path} refer to a sequence of edges connecting the source to the sink. Each edge $e \in E_t$ is associated with a block of code and the correction that introduced it. Any path translates into an algorithm, obtained by concatenating the code blocks along the path. We denote by $P_a$ the path inducing algorithm $a$. The set of algorithms accessible as paths in $G_t$ is denoted $\mathcal{A}_t \subseteq \mathcal{A}$. We partition it into $\mathcal{A}_t^{\text{eval}}$ and $\mathcal{U}_t$, the subsets of evaluated and unevaluated algorithms, respectively.
 
A \emph{correction} is an operator affecting the semantics of an algorithm by introducing or removing algorithmic features. It consists of one or more syntactic code modifications. Formally, a correction is a mapping:
\begin{equation*}
    \phi : D_\phi \to \mathcal{A}, \qquad D_\phi \subseteq \mathcal{A},
\end{equation*}
defined on a domain $D_\phi$ of compatible algorithms (those containing the code blocks $\phi$ removes, replaces, or extends). Applying $\phi$ to $a \in D_\phi$ replaces sub-paths of $P_a$ with new ones. Incorporating the new nodes and edges augments the graph and expands the set of accessible paths.
 
Writing $E_\phi \subseteq E_t$ for the set of edges introduced by correction $\phi$, the edges of $G_t$ are partitioned by their introducing correction, so the set of corrections used by an algorithm $a$ is directly readable from its path:
\begin{equation*}
    \mathcal{C}(a) \triangleq \big\{\, \phi : P_a \cap E_\phi \neq \emptyset \,\big\}.
\end{equation*}
 
Every algorithm extracted from $G_t$ therefore carries an explicit record of the algorithmic features it incorporates, which makes the search interpretable and enables correction-level credit assignment.

\subsection{Search Process}
 
The complete procedure is outlined in Algorithm~\ref{alg:graph_search}. At each iteration, we generate a set of corrections and cycle through them to extend the graph, evaluate new candidates, and update the surrogate model with the newly collected information. Figure~\ref{fig:correction_diagram} illustrates how corrections are processed in this cycle.
 
\begin{figure}[h]
    \centering
    \includegraphics[width=0.95\textwidth]{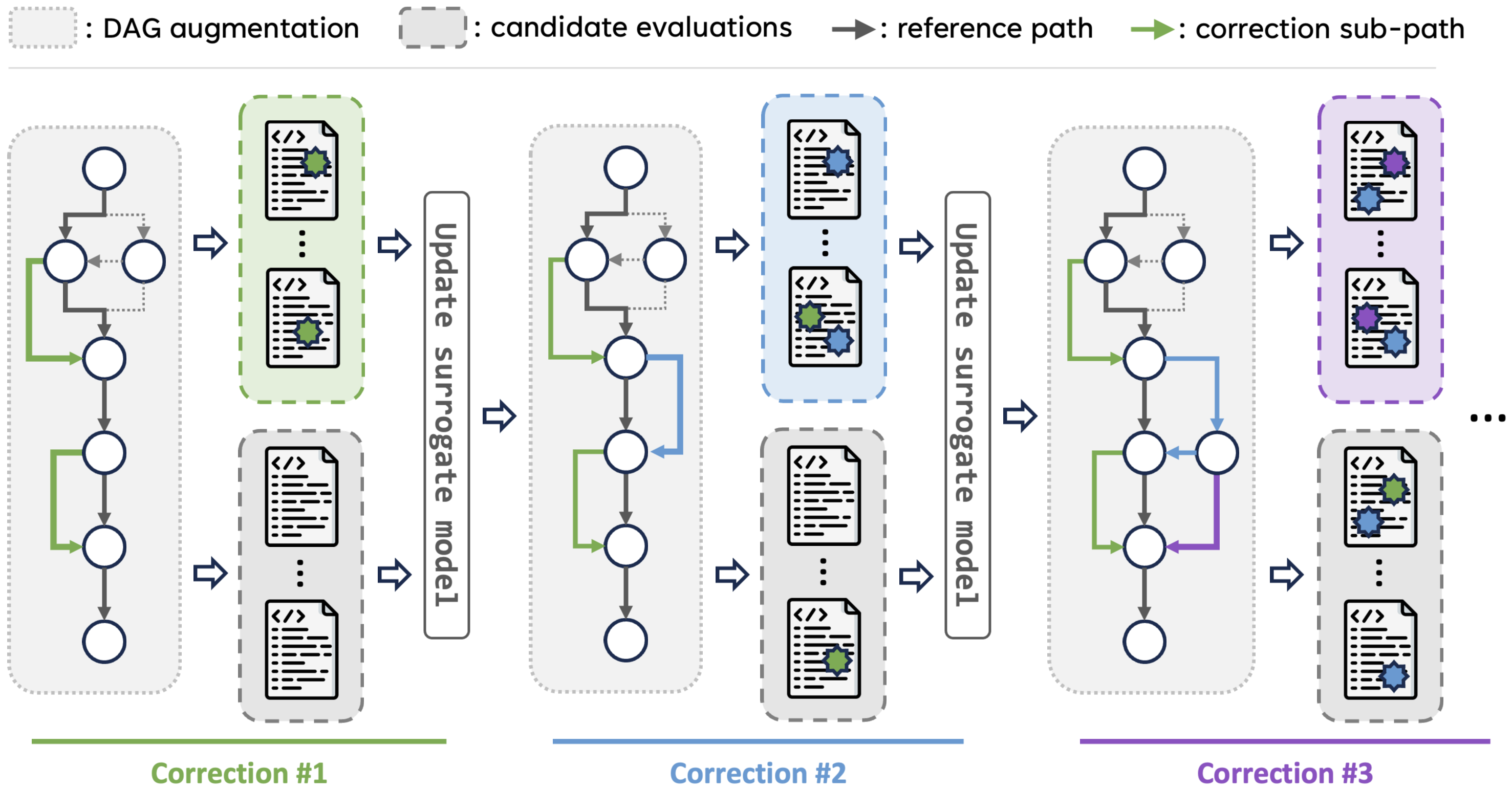}
    \caption{Single iteration of the graph-based search. Each iteration generates multiple corrections. A correction augments the graph by inserting sub-paths, enlarging the set of source-to-sink paths. From the augmented set of unevaluated paths $\mathcal{U}_t$, we form two evaluation groups: top-scoring algorithms re-routed through the new correction (colored group), and highest-ranked unevaluated paths under the surrogate model (uncolored group). Evaluated algorithms are added to the dataset, and the surrogate is retrained before applying the next correction.}
    \label{fig:correction_diagram}
\end{figure}

We propose two graph-based approaches: a context-agnostic and a context-guided variant. Below, we detail the core steps, alongside the supplementary step exclusive to the context-guided variant.

\paragraph{Generating corrections.} We select the highest-fitness evaluated algorithm as the reference algorithm:
\begin{equation*}
    a_{\textrm{ref}} \in \arg\max_{a \in \mathcal{A}_t^{\textrm{eval}}} \widehat{V}(a).
\end{equation*}
For the context-agnostic approach, we obtain a new set of corrections $\Phi_t$ by querying $\textrm{LLM}_\textrm{cor}$ conditioned solely on $a_{\text{ref}}$. The LLM analyzes the reference algorithm and proposes performance-improving corrections. The context-guided variant additionally provides $\textrm{LLM}_\textrm{cor}$ with a natural language summary, $\mathcal{S}_t$, synthesizing the empirical success of corrections applied in prior iterations.

\paragraph{Evaluating candidate paths.} The set of unevaluated paths $\mathcal{U}_t$ grows combinatorially with each correction, making exhaustive evaluation intractable. Therefore, when applying a new correction $\phi$ to the graph, we evaluate at most $n_{\text{eval}}$ candidates, split into two sets of equal size. The exploration set $\mathcal{E}_\phi^{(1)}$ probes the effect of the new correction $\phi$ at the high end of the fitness distribution. It consists of the highest-fitness algorithms compatible with $\phi$, with $\phi$ applied:
\begin{equation*}
\mathcal{E}_\phi^{(1)} = \left\{\, \phi(a) : a \in \mathrm{Top}^{\widehat{V}}_{n_{\text{eval}}/2}\left(\mathcal{A}_t^{\text{eval}} \cap D_\phi\right) \right\}.
\end{equation*}
The exploitation set $\mathcal{E}_\phi^{(2)}$ leverages the information accumulated over previous iterations, via the surrogate model $\mathcal{M}_t$, to concentrate computation on high-potential unevaluated algorithms:
\begin{equation*}
\mathcal{E}_\phi^{(2)} = \mathrm{Top}^{\mathcal{M}_t}_{n_{\text{eval}}/2}\left(\mathcal{U}_t\right).
\end{equation*}
Evaluations from both sets are appended to $\mathcal{A}_t^{\text{eval}}$ and $\mathcal{M}_t$ is retrained before the next iteration.

\begin{algorithm}[t]
\small
\caption{Graph-based {\color{softgreen}context-guided} search}
\label{alg:graph_search}
\begin{algorithmic}[1]
\REQUIRE Budget $B$, evaluation cap per iteration $n_{\text{eval}}$
\STATE Sample $a_0 \sim \mathrm{LLM}_{\text{alg}}(\cdot \mid \emptyset)$, evaluate $a_0$ to obtain $\widehat{V}(a_0)$, and initialize graph $G_0$
\STATE $t \gets 0; \quad \mathcal{A}_0^{\text{eval}} \gets \{a_0\}; \quad \mathcal{U}_0 \gets \emptyset; \quad \Phi \gets \emptyset {\color{softgreen}; \quad \mathcal{S}_0 \gets \emptyset}$
\WHILE{$B > 0$}
    \STATE $a_{\text{ref}} \gets \arg\max_{a \in \mathcal{A}_t^{\text{eval}}} \widehat{V}(a)$ \quad \textit{// Select reference algorithm}
    \STATE $\Phi_t, c_\text{cor} \gets \mathrm{LLM}_{\text{cor}}(a_{\text{ref}})$ {\color{softgreen} or $\mathrm{LLM}_{\text{cor}}(a_{\text{ref}}, \mathcal{S}_t); \quad B \gets B - c_\text{cor}$} \quad \textit{// Generate corrections}
    \FORALL{$\phi \in \Phi_t$}
        \STATE \textit{// Augment graph}
        \STATE Insert $\phi$'s sub-paths into $G_t$
        \STATE Add newly accessible paths to $\mathcal{A}_t$ and $\mathcal{U}_t$
        \STATE \textit{// Evaluate candidates}
        \STATE $\mathcal{E}_\phi \gets \left\{\phi(a) : a \in \mathrm{Top}^{\widehat{V}}_{n_{\text{eval}}/2}\left(\mathcal{A}_t^{\text{eval}} \cap D_\phi\right)\right\} \cup \mathrm{Top}^{\mathcal{M}_t}_{n_{\text{eval}}/2}\left(\mathcal{U}_t\right)$
        \STATE Evaluate $\widehat{V}(a)$ for all $a \in \mathcal{E}_\phi$
        \STATE $\mathcal{A}_{t}^{\text{eval}} \gets \mathcal{A}_t^{\text{eval}} \cup \mathcal{E}_\phi; \quad \mathcal{U}_{t} \gets \mathcal{U}_t \setminus \mathcal{E}_\phi$
        \STATE \textit{// Update surrogate}
        \STATE $\Phi \gets \Phi \cup \{\phi\}$ 
        \STATE Retrain $\mathcal{M}_{t}$ on $\left\{(r(a), \widehat{V}(a)) : a \in \mathcal{A}_{t}^{\text{eval}}\right\}$
    \ENDFOR
    {
    \color{softgreen}
    \STATE \textit{// Update search summary}
    \STATE Compute $\{\Delta_\phi\}_{\phi \in \Phi}$ from $\mathcal{M}_{t}$
    \STATE $\mathcal{S}_{t+1}, c_\text{sum} \gets \mathrm{LLM}_{\text{summary}}(\mathcal{S}_{t}, \{(\phi, \Delta_\phi)\}_{\phi \in \Phi_t}); \quad B \gets B - c_\text{sum}$
    }
    \STATE $\mathcal{A}_{t+1}^{\text{eval}} \gets \mathcal{A}_t^{\text{eval}}; \quad \mathcal{U}_{t+1} \gets \mathcal{U}_t; \quad G_{t+1} \gets G_t; \quad t \gets t+1$
\ENDWHILE
\RETURN $\arg\max_{a \in \mathcal{A}_t^{\text{eval}}} \widehat{V}(a)$
\end{algorithmic}
\end{algorithm}
 
\paragraph{Updating the surrogate model.} A central benefit of the graph representation is that it enables learning the contribution of each individual correction to algorithm fitness. Because the set $\mathcal{C}(a)$ of corrections used by algorithm $a$ is directly observable from the path of $a$, each algorithm can be mapped to a low-dimensional representation $r(a) = \left(\mathds{1}_{\{\phi \in \mathcal{C}(a)\}}\right)_{\phi \in \Phi}$. We train a random forest regressor $\mathcal{M}_t : \{0,1\}^{|\Phi|} \to \mathbb{R}$ on the dataset $\left\{(r(a), \widehat{V}(a)) : a \in \mathcal{A}_t^{\text{eval}}\right\}$ to predict fitness based on the presence of corrections in the cumulative set $\Phi$. This model is used to rank unevaluated paths and extract high-potential candidates for the next evaluation step.
 
\paragraph{Search summary (context-guided variant only).} To provide the correction-generating LLM with actionable context from past search iterations, we maintain a concise natural-language summary $\mathcal{S}_t$ that synthesizes the marginal contribution $\Delta_\phi$ of generated corrections. Estimating these contributions is complicated by noisy evaluations and correction dependencies. We let $\psi_\phi(a)$ denote the Shapley value of correction $\phi$ in the prediction $\mathcal{M}_t(a)$, and quantify the marginal contribution of $\phi$ via the difference in Shapley value between evaluated algorithms that use $\phi$ and those that do not:
\begin{equation*}
\Delta_\phi \triangleq \frac{1}{|\mathcal{A}_t^{\phi}|}\sum_{a \in \mathcal{A}_t^{\phi}}\psi_\phi(a) - \frac{1}{|\mathcal{A}_t^{\neg\phi}|}\sum_{a \in \mathcal{A}_t^{\neg\phi}}\psi_\phi(a),
\end{equation*}
where $\mathcal{A}_t^{\phi} = \{a \in \mathcal{A}_t^{\text{eval}} : \phi \in \mathcal{C}(a)\}$ and $\mathcal{A}_t^{\neg\phi} = \mathcal{A}_t^{\text{eval}} \setminus \mathcal{A}_t^{\phi}$. This estimate disentangles $\phi$'s contribution from those of co-occurring corrections and, by averaging over all evaluated algorithms, provides a robust measure of corrections' effectiveness. The new pairs $\{(\phi, \Delta_\phi)\}_{\phi \in \Phi_t}$ produced at each iteration are passed, along with the current summary $\mathcal{S}_t$, to $\textrm{LLM}_\textrm{summary}$ to produce an updated summary $\mathcal{S}_{t+1}$.

\section{Theoretical Insights}

Existing AAD methods, including our graph-based search, typically leverage multiple parallel runs to improve overall performance. In this section, we address the question: given a total budget $B$, how should one allocate this budget across individual runs? We parameterize this allocation using a per-run budget $\omega \in [c_0, B]$, capturing the search depth of individual runs, where $c_0$ is the minimum cost required to generate a single algorithm. Given $\omega$, the policy $\pi$ performs independent random restarts, providing search breadth, with each trial bounded by cost $\omega$ until the total budget $B$ is exhausted. This allows for $N(\omega) = \lfloor B/\omega \rfloor$ parallel executions of $\pi$. With $z(\pi, \omega)$ being the realized fitness of a single run, we define the expected fitness under total budget $B$ of policy $\pi$ with per-run cost cap $\omega$ as:
\begin{equation*}
    Z_B\left(\pi, \omega \right) \triangleq \mathbb{E}_{\pi, L}\left[\max_{i \in [N]} z^{(i)}(\pi, \omega)\right], \quad N = \left\lfloor \frac{B}{\omega} \right\rfloor,
\end{equation*}
where $z^{(1)}(\pi, \omega), \dots, z^{(N)}(\pi, \omega)$ are samples of $z(\pi, \omega)$. These samples are assumed i.i.d., reflecting parallel runs with independent random seeds. Writing $\mu_\pi(\omega) \triangleq \mathbb{E}\left[z(\pi, \omega)\right]$ for the per-run mean fitness, $Z_B(\pi, \omega)$ can be decomposed as:
\begin{equation*}
Z_B\left(\pi, \omega\right) = \mu_\pi(\omega) + \mathbb{E} \left[\max_{i \in [N]}\left(z^{(i)}(\pi, \omega) - \mu_\pi(\omega)\right)\right].
\end{equation*}
For analytical tractability, we consider a continuous relaxation and assume that the second term depends on $\omega$ only through the effective number of runs $B/\omega$, yielding the structural decomposition:
\begin{equation} \label{eq:decomposition}
    Z_B\left(\pi, \omega\right) = \mu_\pi(\omega) + Q_\pi \left(\frac{B}{\omega}\right).
\end{equation}
which holds when the tail of $z(\pi, \omega) - \mu_\pi(\omega)$ is independent of $\omega$. This decoupling preserves the essential structure: increasing the per-run budget $\omega$ improves the expected per-run fitness $\mu_\pi(\omega)$ at the cost of fewer runs, and thus a smaller order-statistic gain $Q_\pi(B/\omega)$. Both $\mu_\pi$ and $Q_\pi$ naturally exhibit diminishing returns in how mean performance scales with the per-run budget and in the order-statistic maxima achieved across independent runs. We formalize this by assuming that $\mu_\pi: [c_0, \infty) \to \mathbb{R}$ and $Q_\pi: [1, \infty) \to \mathbb{R}_{\geq 0}$ are twice differentiable, increasing, and concave on their domain. We also assume that $\mu_\pi(c_0)$ is identical across all policies $\pi$, since all runs reduce to sampling a single algorithm. 
 
We let $\underline{\omega}_{\pi}(B)$ and $\overline{\omega}_{\pi}(B)$ be the smallest and largest elements of $\Omega^*_{\pi}(B)$, respectively, where
\begin{equation*}
    \Omega^*_{\pi}(B) \triangleq \big\{ \omega \in [c_0, B] : \omega \text{ is a maximizer of } Z_B(\pi, \cdot) \big\}
\end{equation*}
denote the set of locally optimal per-run budgets.

\begin{theorem}[Monotonicity in $B$ and uniqueness]
\label{thm:monotonicity_B}
If $N \mapsto N Q_\pi'(N)$ is non-increasing on $[1,\infty)$, then both $\underline{\omega}_\pi$ and $\overline{\omega}_\pi$ are non-decreasing in $B$.\\
If additionally, $N \mapsto N^2 Q_\pi'(N)$ is decreasing on $[1,\infty)$, then $\Omega^*_{\pi}(B) = \{\omega^\star(B)\}$ with $\omega^\star(B)$ non-decreasing in $B$. 
\end{theorem}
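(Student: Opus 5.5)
The plan is a monotone comparative statics argument: show that $Z_B(\pi,\omega)=\mu_\pi(\omega)+Q_\pi(B/\omega)$ has increasing differences in $(\omega,B)$, and then apply Topkis' theorem to the argmax set on the interval $[c_0,B]$.

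\textbf{Cross partial.} Differentiating in $\omega$ gives
\begin{equation*}
\partial_\omega Z_B(\pi,\omega)=\mu_\pi'(\omega)-\frac{B}{\omega^2}Q_\pi'\!\left(\frac{B}{\omega}\right)=\mu_\pi'(\omega)-\frac{1}{\omega}\,g\!\left(\frac{B}{\omega}\right),\qquad g(N)\triangleq N Q_\pi'(N).
\end{equation*}
For fixed $\omega$, increasing $B$ increases $N=B/\omega$. Since $g$ is non-increasing, $-\frac{1}{\omega}g(B/\omega)$ is non-decreasing in $B$. So $\partial_\omega Z_B$ is non-decreasing in $B$, which is the increasing-differences (supermodularity) property on the lattice $\mathbb{R}\times\mathbb{R}$.

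\textbf{Monotonicity of the extreme maximizers.} The feasible set $[c_0,B]$ also grows in the strong set order as $B$ increases. Topkis' monotonicity theorem then says that $\arg\max_{\omega\in[c_0,B]}Z_B$ is non-decreasing in the strong set order. In particular its least and greatest elements, $\underline{\omega}_\pi$ and $\overline{\omega}_\pi$, are non-decreasing. These elements exist because $Z_B$ is continuous on a compact interval, so the argmax set is closed and nonempty.

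Two technicalities need care:
\begin{itemize}
\item \emph{Domain of $Q_\pi$.} $Q_\pi$ is only defined for $N\ge 1$, which holds since $\omega\le B$. When comparing $B<B'$, points $\omega\in(B,B']$ are feasible only under $B'$, and the strong-set-order argument handles this directly.
\item \emph{``Locally optimal'' versus global maximizers.} I read $\Omega^*$ as the set of global maximizers of $Z_B(\pi,\cdot)$. If genuinely local maxima were intended, I would restrict to the global ones.
\end{itemize}

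\textbf{Uniqueness.} I would show that $Z_B(\pi,\cdot)$ has a single-crossing or strictly quasi-concave structure. Substitute $N=B/\omega$ and write
\begin{equation*}
\partial_\omega Z_B=\mu_\pi'(\omega)-\frac{1}{B}\,N^2Q_\pi'(N).
\end{equation*}
As $\omega$ increases, $N$ decreases. Because $N\mapsto N^2Q_\pi'(N)$ is strictly decreasing, the term $\frac{1}{B}N^2Q_\pi'(N)$ is strictly increasing in $\omega$. Meanwhile $\mu_\pi'$ is non-increasing by concavity of $\mu_\pi$. Hence $\partial_\omega Z_B$ is strictly decreasing in $\omega$. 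So $Z_B(\pi,\cdot)$ is strictly concave in the derivative sense, and its maximizer on $[c_0,B]$, whether interior or at a boundary, is unique. Then $\underline{\omega}_\pi=\overline{\omega}_\pi=\omega^\star$, and monotonicity of $\omega^\star$ follows from the first part.

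\textbf{Main obstacle.} I expect the delicate step to be applying Topkis cleanly when the constraint set depends on $B$. If that proves awkward, I would instead argue directly with the standard exchange argument. Take $B<B'$, $\omega\in\Omega^*(B)$ and $\omega'\in\Omega^*(B')$, and suppose $\omega'<\omega$. Then $\omega'\in[c_0,B]$ and $\omega\in[c_0,B']$, so both points are feasible in both problems. Integrating $\partial_\omega Z$ from $\omega'$ to $\omega$ and using increasing differences gives
\begin{equation*}
Z_{B'}(\omega)-Z_{B'}(\omega')\ge Z_B(\omega)-Z_B(\omega')\ge 0 .
\end{equation*}
Therefore $\omega\in\Omega^*(B')$ and $\omega'\in\Omega^*(B)$. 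Applying this to the extreme elements yields $\underline{\omega}(B)\le\underline{\omega}(B')$ and $\overline{\omega}(B)\le\overline{\omega}(B')$.
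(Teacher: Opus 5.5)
Your proposal is correct and follows essentially the same route as the paper: increasing differences of $Z_B$ in $(\omega,B)$ from the monotonicity of $N\mapsto NQ_\pi'(N)$, then Topkis' theorem on $\arg\max_{\omega\in[c_0,B]}Z_B$, and uniqueness from strict concavity driven by the strict decrease of $N\mapsto N^2Q_\pi'(N)$. Your first-derivative version of the uniqueness step (showing that $\partial_\omega Z_B=\mu_\pi'(\omega)-\frac{1}{B}N^2Q_\pi'(N)$ is strictly decreasing in $\omega$) is slightly cleaner than the paper's claim that $\partial^2 Z_B/\partial\omega^2<0$ pointwise, since a merely strictly decreasing $N^2Q_\pi'(N)$ only guarantees a non-positive derivative.
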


Theorem \ref{thm:monotonicity_B} shows that when the distribution of $Q_\pi$ is sufficiently light-tailed, one should allocate more effort to individual runs as the total budget increases, rather than simply increasing the number of runs.
The first condition holds for common distributions, including Gaussian and uniform. Under the second, more restrictive condition on the tail of $Q_\pi$, this per-run budget is unique and non-decreasing. However, this trend toward greater depth remains tempered by the need for breadth. Our next result shows that the optimal strategy never collapses into a single exhaustive run as the total budget increases.

\begin{theorem}[Interior optimum] \label{thm:interior}
Suppose $Z_B(\pi, \cdot)$ is bounded above uniformly for any $B > 0$. Then there exists $\bar{B} > 0$ such that:
\begin{equation*}
    \Omega^*_{\pi}(B) \subseteq (c_0, B) \qquad \forall B \geq \bar{B}.
\end{equation*}
\end{theorem}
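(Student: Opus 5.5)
The plan is to work with the continuous relaxation \eqref{eq:decomposition}, $Z_B(\pi,\omega)=\mu_\pi(\omega)+Q_\pi(B/\omega)$, and examine the sign of its derivative at the two endpoints of $[c_0,B]$:
\begin{equation*}
\partial_\omega Z_B(\pi,\omega)=\mu_\pi'(\omega)-\frac{B}{\omega^2}\,Q_\pi'\!\left(\frac{B}{\omega}\right).
\end{equation*}
A local maximizer on $[c_0,B]$ cannot sit at $c_0$ if this derivative is strictly positive there. Likewise, it cannot sit at $B$ if the derivative is strictly negative there. So it suffices to show that both strict inequalities hold for all $B$ beyond some $\bar B$.

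The first step extracts boundedness of $\mu_\pi$ and $Q_\pi$ from the uniform bound $Z_B(\pi,\cdot)\le K$.
\begin{itemize}
\item Since $Q_\pi\ge 0$, we get $\mu_\pi(\omega)\le Z_B(\pi,\omega)\le K$ for every $\omega\le B$. Letting $B$ grow gives $\sup\mu_\pi\le K$.
\item Taking $\omega=c_0$ gives $\mu_\pi(c_0)+Q_\pi(B/c_0)\le K$ for all $B$. Hence $Q_\pi$ is bounded on $[1,\infty)$.
\end{itemize}

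The key lemma is then elementary. If $g$ is increasing, concave, differentiable and bounded on $[x_0,\infty)$, then $x\,g'(x)\to 0$. The proof uses that $g'$ is non-increasing, so for $x\ge 2x_0$,
\begin{equation*}
\tfrac{x}{2}\,g'(x)\le g(x)-g(x/2).
\end{equation*}
The right-hand side tends to $0$ because $g$ converges. Applying the lemma to both functions gives $B\mu_\pi'(B)\to 0$ and $N Q_\pi'(N)\to 0$ as $N\to\infty$.

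The two endpoints are then handled as follows.
\begin{itemize}
\item \emph{Right endpoint $\omega=B$.} Here $N=B/\omega=1$, so the derivative equals $\mu_\pi'(B)-Q_\pi'(1)/B$. Multiplying by $B>0$ gives $B\mu_\pi'(B)-Q_\pi'(1)$, which is eventually negative provided $Q_\pi'(1)>0$.
\item \emph{Left endpoint $\omega=c_0$.} Writing $N=B/c_0$, the derivative equals $\mu_\pi'(c_0)-\tfrac{1}{c_0}N Q_\pi'(N)$. This tends to $\mu_\pi'(c_0)$, which is eventually positive provided $\mu_\pi'(c_0)>0$.
\end{itemize}
Taking $\bar B$ as the larger of the two thresholds, and also $\bar B > c_0$ so that the interval is nondegenerate, yields $\Omega^*_\pi(B)\subseteq(c_0,B)$ for all $B\ge\bar B$.

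The main obstacle is justifying the strict positivity of $Q_\pi'(1)$ and $\mu_\pi'(c_0)$. The intended route uses that the functions are increasing and concave. If $\mu_\pi'(c_0)=0$, then $\mu_\pi'\le 0$ everywhere, so $\mu_\pi$ would be constant, contradicting "increasing" read as strictly increasing. The same argument applies to $Q_\pi'(1)$ on $[1,\infty)$. I would state explicitly that "increasing" is taken in the strict sense, since otherwise degenerate constant cases make the endpoints optimal.

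A secondary point is that "maximizer" is understood as a local maximizer of the relaxed continuous objective on $[c_0,B]$. At an endpoint, that notion is exactly what the one-sided derivative sign rules out.
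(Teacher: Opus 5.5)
Your proposal is correct and follows essentially the same route as the paper's proof. Both extract boundedness of $\mu_\pi$ and $Q_\pi$, apply the lemma that $x h'(x) \to 0$ for bounded increasing concave $h$, and then sign $\partial_\omega Z_B$ at the two endpoints: at $\omega = c_0$ via $N Q_\pi'(N) \to 0$, and at $\omega = B$ via $B\mu_\pi'(B) \to 0$ against $Q_\pi'(1) > 0$. Your explicit derivation of the boundedness of $\mu_\pi$ and $Q_\pi$, and your use of concavity plus strict monotonicity to obtain $\mu_\pi'(c_0) > 0$ and $Q_\pi'(1) > 0$, fill in steps the paper only asserts.
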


Provided the budget $B$ is sufficiently large, the optimal strategy strictly balances per-run computational effort with the quantity of independent runs. In this regime, restarts are key to ensure the budget isn't exhausted on a single trajectory that may have plateaued. While the total budget $B$ dictates the scale of the search, the inherent efficiency of the policy itself also influences how aggressively the per-run budget should be tuned.

\begin{theorem}[Monotonicity in $\mu'$]
\label{thm:monotonicity_mu_prime}
Let $\pi_1, \pi_2$ be two policy families sharing the same order-statistic term $Q_\pi$. If $\mu_{\pi_2}'(\omega) \geq \mu_{\pi_1}'(\omega)$ for all $\omega \in [c_0, B]$, then:
\begin{enumerate}
    \item $Z_B(\pi_2, \omega) \geq Z_B(\pi_1, \omega), \quad \forall \omega \in [c_0, B]$, 
    \item $\underline{\omega}_{\pi_2}(B) \geq \underline{\omega}_{\pi_1}(B)$, and $\overline{\omega}_{\pi_2}(B) \geq \overline{\omega}_{\pi_1}(B)$.
\end{enumerate}
\end{theorem}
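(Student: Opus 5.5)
Both claims follow from the decomposition \eqref{eq:decomposition}. Because $\pi_1$ and $\pi_2$ share the order-statistic term $Q_\pi$, their difference depends only on the mean terms:
\begin{equation*}
D(\omega) \triangleq Z_B(\pi_2,\omega) - Z_B(\pi_1,\omega) = \mu_{\pi_2}(\omega) - \mu_{\pi_1}(\omega).
\end{equation*}

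\textbf{Part 1.} The model assumes $\mu_\pi(c_0)$ is the same for every policy, so $D(c_0)=0$. By hypothesis $D'(\omega) = \mu_{\pi_2}'(\omega)-\mu_{\pi_1}'(\omega) \ge 0$ on $[c_0,B]$. Integrating, $D(\omega)=\int_{c_0}^{\omega} D'(s)\,ds \ge 0$ for every $\omega \in [c_0,B]$, which is the first claim.

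\textbf{Part 2.} This is a monotone comparative statics (Topkis-type) argument. Since $D$ is non-decreasing, the family $\{Z_B(\pi_1,\cdot), Z_B(\pi_2,\cdot)\}$ has increasing differences in $(\omega,\pi)$: for $\omega' \ge \omega$,
\begin{equation*}
Z_B(\pi_2,\omega') - Z_B(\pi_1,\omega') \ge Z_B(\pi_2,\omega) - Z_B(\pi_1,\omega).
\end{equation*}
I would read $\Omega^*_\pi(B)$ as the set of global maximizers on $[c_0,B]$; the phrase ``locally optimal'' is ambiguous, and the plan relies on this reading. Under it, each set is nonempty and compact, since $Z_B$ is continuous on a compact interval. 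The argument then runs in two steps.

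\emph{Step (a).} Take $\omega_1\in\Omega^*_{\pi_1}(B)$ and $\omega_2\in\Omega^*_{\pi_2}(B)$ with $\omega_1>\omega_2$. Optimality of $\omega_1$ for $\pi_1$ gives $Z_B(\pi_1,\omega_1)\ge Z_B(\pi_1,\omega_2)$. Adding $D(\omega_1)\ge D(\omega_2)$ yields $Z_B(\pi_2,\omega_1)\ge Z_B(\pi_2,\omega_2)$, so $\omega_1\in\Omega^*_{\pi_2}(B)$. Symmetrically, $Z_B(\pi_2,\omega_2)\ge Z_B(\pi_2,\omega_1)$ together with $D(\omega_1)\ge D(\omega_2)$ gives $Z_B(\pi_1,\omega_2)\ge Z_B(\pi_1,\omega_1)$, so $\omega_2\in\Omega^*_{\pi_1}(B)$. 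Hence $\max(\omega_1,\omega_2)\in\Omega^*_{\pi_2}(B)$ and $\min(\omega_1,\omega_2)\in\Omega^*_{\pi_1}(B)$; this is the strong set order.

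\emph{Step (b).} For the largest elements, take $\omega_1=\overline\omega_{\pi_1}(B)$ and $\omega_2=\overline\omega_{\pi_2}(B)$, and suppose $\omega_1>\omega_2$. By (a), $\omega_1\in\Omega^*_{\pi_2}(B)$, which contradicts maximality of $\omega_2$. For the smallest elements, take $\omega_1=\underline\omega_{\pi_1}(B)$ and $\omega_2=\underline\omega_{\pi_2}(B)$, and suppose $\omega_1>\omega_2$. By (a), $\omega_2\in\Omega^*_{\pi_1}(B)$, which contradicts minimality of $\omega_1$. Both contradictions give the stated inequalities.

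\textbf{Main obstacle.} The difficulty is the interpretation of $\Omega^*_\pi(B)$, not the computation. If ``locally optimal'' really means local maximizers, the Topkis argument fails. The fallback would be a first-order argument. Concavity of $\mu_\pi$, together with $\omega \mapsto Q_\pi(B/\omega)$, may make $Z_B$ well behaved enough that its local maximizers are characterized by KKT conditions on $[c_0,B]$. Concretely, the KKT condition at an interior point is $\mu_\pi'(\omega)=\tfrac{B}{\omega^2}Q_\pi'(B/\omega)$. Since $\mu_{\pi_2}'\ge\mu_{\pi_1}'$, the derivative of $Z_B(\pi_2,\cdot)$ dominates that of $Z_B(\pi_1,\cdot)$ pointwise. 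This should shift the extreme critical points rightward, although making that shift rigorous requires care at the boundary points $c_0$ and $B$.
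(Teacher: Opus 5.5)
Your proof is correct and follows the paper's argument: Part 1 integrates $\mu_{\pi_2}'-\mu_{\pi_1}'\ge 0$ from the common value at $c_0$, and Part 2 combines the increasing-differences inequality with a contradiction at the extreme maximizers (your strong-set-order step (a) simply packages the paper's two cases, equality versus strict inequality). The paper also reads $\Omega^*_\pi(B)$ as the set of global maximizers on $[c_0,B]$, as in its Topkis argument for Theorem~\ref{thm:monotonicity_B}, so your fallback discussion of local maximizers is not needed.
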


A policy that yields greater marginal performance gains per unit of run budget performs better at all effort levels $\omega$, and the set of optimal per-run budgets of the higher marginal performance policy dominates that of the lower marginal performance policy in the product order. This suggests that efficient policies should prioritize depth over breadth.

\section{Numerical Results} \label{sec:numerical_results}

We evaluate our framework on three combinatorial optimization problems: the Traveling Salesman Problem (TSP), the Location Routing Problem (LRP)~\citep{balakrishnan1987integrated}, and a variant of the Bus Routing Problem (BRP)~\citep{bertsimas2019optimizing} without predefined stop locations. We compare our \emph{graph-based} search and its \emph{context-guided} variant against a \emph{baseline} that generates full algorithms with natural-language descriptions and refines them iteratively as in EoH~\citep{liu2024evolution}. All LLM queries use DeepSeek-V3.2 \citep{liu2025deepseek}. Implementation details, problem specifications, and prompts are provided in Appendices \ref{apx:implementation_details}, \ref{apx:problem_specifications}, and \ref{apx:prompts}, respectively.

We run each method 50 times with 20 iterations per run. All methods start from the same 50 initial, randomly seeded algorithms to reduce variance across runs. Figure~\ref{fig:fitness_cost} displays the fitness trajectories relative to cumulative token usage across all runs for TSP and LRP, highlighting the 75th, 90th, and 95th quantiles. BRP is omitted from Figure~\ref{fig:fitness_cost} for brevity; the full results are available in Appendix~\ref{apx:implementation_details}. The graph-based methods exhibit more consistent improvements across all three quantiles: whereas the baseline quickly stalls after a few iterations, the graph-based methods keep improving over a longer horizon. While the context-guided search is highly beneficial for LRP, it surprisingly degrades performance for TSP. We explain this phenomenon by the difference in prior knowledge of LLMs: TSP has been extensively studied and is heavily represented in pretraining data, providing the LLM with a strong prior \citep{masoud2024exploring, elhenawy2024eyeballing}. In contrast, LRP is comparatively underrepresented. The vanilla graph-based method already exploits this prior effectively on TSP, and providing a search summary dilutes the knowledge while incurring additional inference cost. However, on LRP, where the prior is weaker, the search substantially benefits from the summary.

\begin{figure}[ht]
    \centering
    \includegraphics[width=\textwidth]{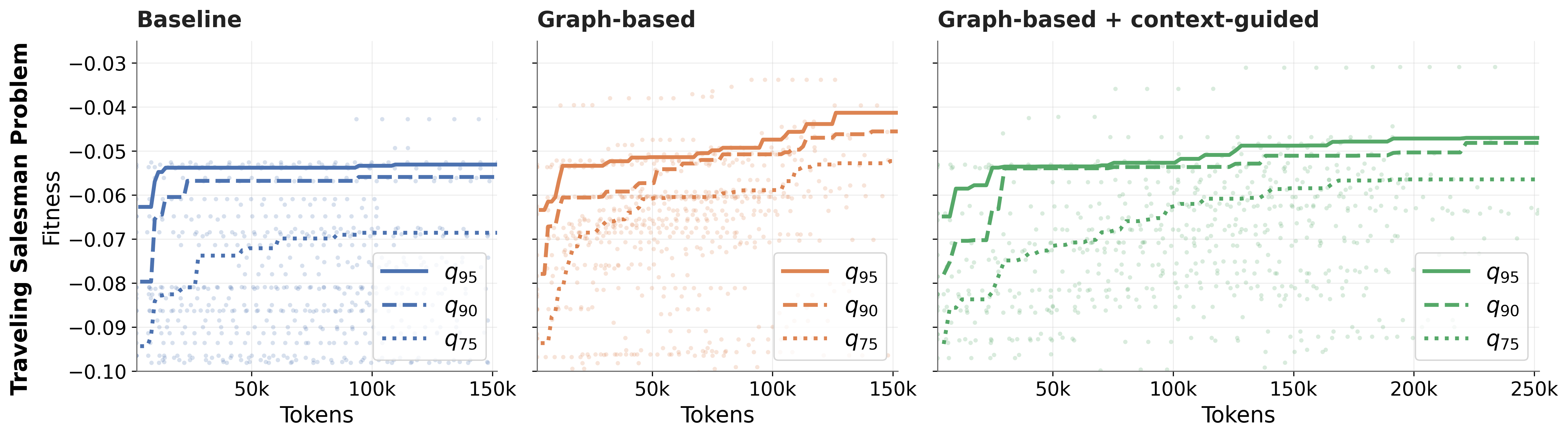}
    \includegraphics[width=\textwidth]{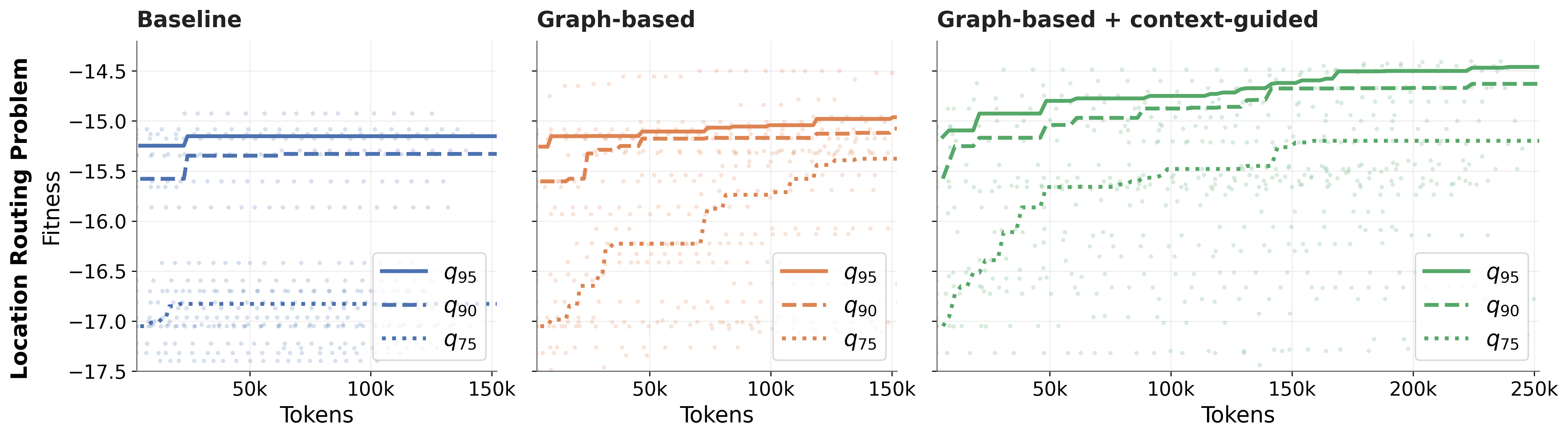}
    \caption{Fitness against cumulative tokens on TSP and LRP. The solid, dashed, and dotted lines show the 95th, 90th, and 75th quantiles, respectively.}
    \label{fig:fitness_cost}
\end{figure}


The baseline method's tendency to stall is further evidenced by the decaying improvement rates shown in Figure~\ref{fig:improvement_rate}, whereas graph-based methods demonstrate consistent gains over a longer horizon. 
We attribute this to the rapid growth in algorithm length and complexity over iterations. Refining a full, long algorithm in a single LLM query is a complex task, making the model more likely to introduce a faulty edit that degrades performance or a syntax error that renders the code unusable. Graph-based search handles long algorithms more reliably: syntax errors are less likely due to shorter outputs, and detrimental corrections can be toggled off while beneficial ones are retained.

\begin{wrapfigure}[15]{r}{0.55\textwidth} 
  \centering
  \vspace{-12pt}
  \includegraphics[width=0.43\textwidth]{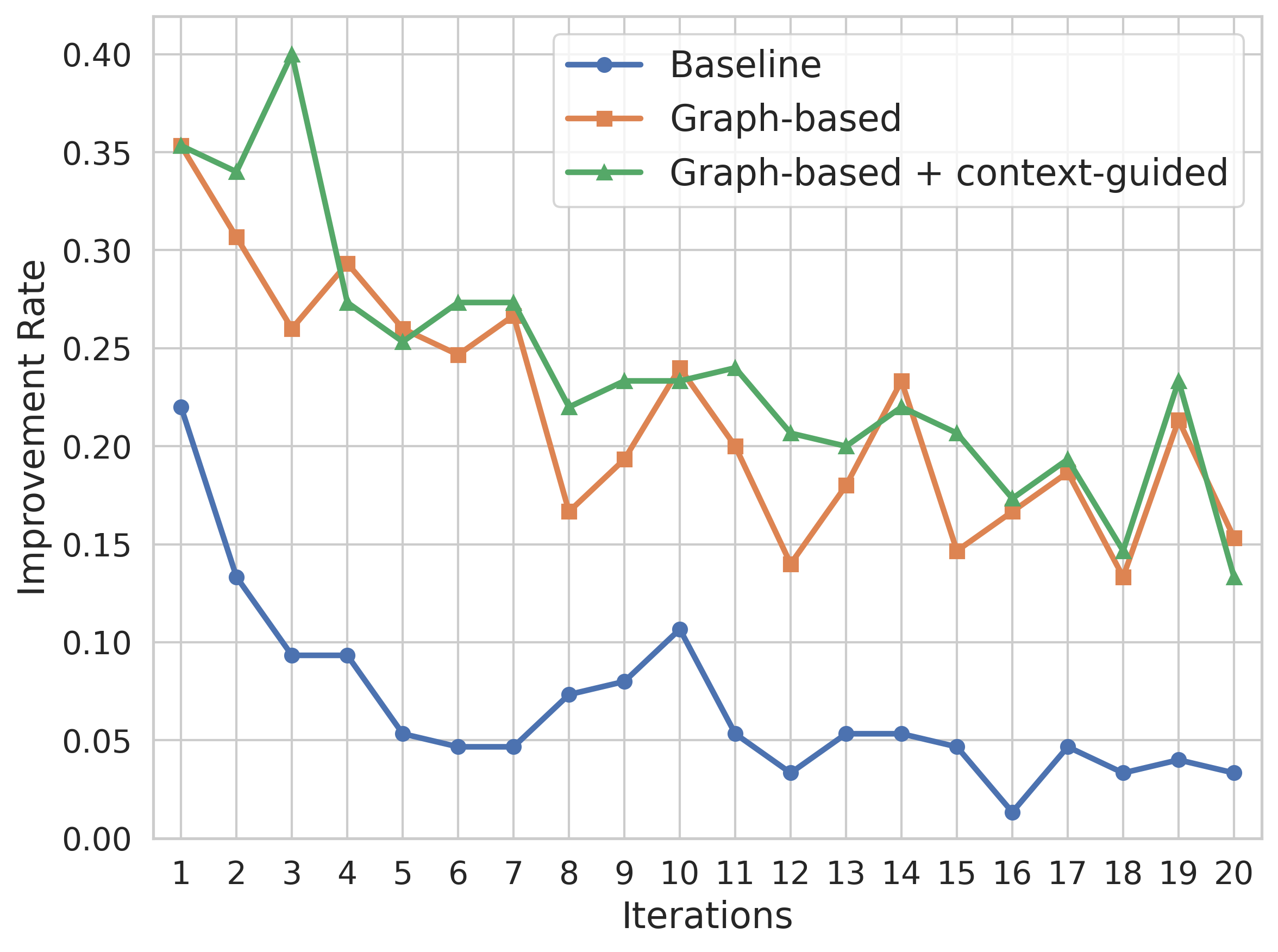}
  \caption{Proportion of iterations that strictly improve the incumbent's best fitness, averaged across all problems. Per-problem curves are shown in Appendix~\ref{apx:implementation_details}.}
  \label{fig:improvement_rate}
\end{wrapfigure}

To estimate the performance of a method capped at $n$ iterations under a budget $B$, we simulate $1000$ bootstrap trajectories of sequential rollouts. Reaching the $n$-iteration cap triggers a random restart. We track the incumbent's best fitness along the trajectory and report the final fitness averaged across the $1000$ trajectories.

We report in Table~\ref{tab:problem_budget_results} the expected best fitness achieved by each search method, capped at $n$ iterations per run, with random restarts performed until the total budget $B$ is exhausted. When $n=0$, all methods coincide with repeatedly sampling zero-shot algorithms until budget exhaustion. Three key observations emerge from these results.

First, a small budget favors multiple random restarts. Notably, at the tightest budget $B = 10^5$, no iterative refinement strategy beats zero-shot sampling for TSP. Since the intrinsic stochasticity of LLMs yields a wide diversity of zero-shot algorithms, it is more advantageous to sample multiple independent zero-shot algorithms rather than allocating a fraction of the budget on iterative refinements that yield only marginal improvements.

Second, across all problems and methods, the optimal number of iterations per run increases with the total budget $B$. This observation aligns with our theoretical findings that the per-run cost should be non-decreasing with respect to $B$ (Theorem~\ref{thm:monotonicity_B}). Similarly, the most expensive context-guided search only outperforms other methods for larger budgets and on problems for which the LLM lacks extensive knowledge.

Finally, at each method's best iterative configuration, the vanilla graph-based search outperforms the baseline across all problems and budget levels. On TSP, this corresponds to a 7.5\%, 19.7\%, and 22.3\% smaller optimality gap at $B=10^5$, $B=10^6$, and $B=10^7$, respectively, confirming the practical efficacy of our approach to fully exploit the capabilities of the LLM.

\setlength{\tabcolsep}{4.5pt}
\begin{table}[h]
\caption{Mean performance across varying budgets. For each budget, the best fitness per method is underlined, and the best fitness across all methods is bolded.}
\label{tab:problem_budget_results}
\small
\centering
\begin{tabular}{c | c | c | c c c | c c c | c c c}
\toprule
 &  &  & \multicolumn{3}{c}{Baseline} & \multicolumn{3}{c}{Graph} & \multicolumn{3}{c}{Graph + context-guided} \\
 & $B$ & n = 0 & $n=5$ & $n=10$ & $n=20$ & $n=5$ & $n=10$ & $n=20$ & $n=5$ & $n=10$ & $n=20$ \\
\midrule
\multirow{3}{*}{\rotatebox{90}{\shortstack{\textbf{TSP} \\ (x100)}}} & $10^{5}$ & \textbf{-5.476} & \underline{-7.140} & -8.217 & -9.664 & \underline{-6.603} & -6.884 & -7.779 & \underline{-7.779} & -8.202 & -9.335 \\
 & $10^{6}$ & -5.310 & \underline{-5.325} & -5.484 & -5.558 & -4.573 & -4.742 & \textbf{\underline{-4.275}} & -5.146 & -5.065 & \underline{-5.043} \\
 & $10^{7}$ & -5.310 & -5.259 & -5.261 & \underline{-4.430} & -3.958 & -3.829 & \textbf{\underline{-3.443}} & -4.606 & -3.704 & \underline{-3.651} \\
\midrule\midrule
\multirow{3}{*}{\rotatebox{90}{\textbf{LRP}}} & $10^{5}$ & \textbf{-15.16} & \underline{-16.90} & -19.73 & -24.38 & \underline{-16.50} & -19.26 & -22.31 & \underline{-16.73} & -20.19 & -20.70 \\
 & $10^{6}$ & -15.08 & \underline{-15.15} & -15.16 & -15.50 & \underline{-14.90} & -15.00 & -15.07 & \textbf{\underline{-14.70}} & -14.81 & -14.83 \\
 & $10^{7}$ & -15.08 & -15.08 & \underline{-14.93} & -14.95 & -14.63 & -14.57 & \underline{-14.53} & -14.49 & -14.50 & \textbf{\underline{-14.42}} \\
\midrule\midrule
\multirow{3}{*}{\rotatebox{90}{\textbf{BRP}}} & $10^{5}$ & -2067 & \underline{-1991} & -2033 & -2070 & \textbf{\underline{-1969}} & -2053 & -2146 & \underline{-2073} & -2109 & -2141 \\
 & $10^{6}$ & -1989 & -1732 & \underline{-1642} & -1679 & \textbf{\underline{-1580}} & -1600 & -1668 & \underline{-1583} & -1599 & -1720 \\
 & $10^{7}$ & -1989 & -1705 & \underline{-1505} & -1518 & -1528 & -1528 & \underline{-1471} & -1467 & \textbf{\underline{-1366}} & -1393 \\
\bottomrule
\end{tabular}
\end{table}

\section{Conclusion}

We introduced a graph-based search framework for budget-efficient AAD that alleviates the inefficiencies of existing LLM-driven pipelines. By representing algorithms as paths in a directed acyclic graph and querying the LLM for localized, reusable corrections rather than full rewrites, our method significantly increases algorithm yield and enables correction-level credit assignment. We demonstrated that this approach consistently outperforms full-algorithm search baselines on combinatorial optimization tasks, while also providing theoretical and empirical insights into depth-versus-breadth trade-offs under budget constraints. Specifically engineered to navigate the bottleneck of costly LLM inference, a limitation of our framework is its reliance on a cheap evaluation oracle. In domains where the oracle is highly expensive, the greater number of evaluations would likely undermine the cost savings on inference. Future work includes adaptive tuning of the per-run budget and information sharing across parallel runs.


{
\small
\bibliography{refs}
}

\newpage
\appendix
\section{Proofs}

Let $N = B/\omega$. Differentiating $Z_B$ once with respect to $\omega$:
\begin{align} \label{eq:Z_first_deriv}
    \frac{\partial Z_B}{\partial \omega}(\pi, \omega) &= \mu_\pi'(\omega) - \frac{B}{\omega^2} Q_\pi' \left(\frac{B}{\omega}\right)\\
    &=  \mu_\pi'(\omega) - \frac{N}{\omega} Q_\pi' \left(N\right) \nonumber
\end{align}
and twice, with $N = B/\omega$,
\begin{align} \label{eq:Z_second_deriv}
    \frac{\partial^2 Z_B}{\partial \omega^2}(\pi, \omega) &= \mu_\pi''(\omega) + \frac{2B}{\omega^3}Q_\pi' \left(\frac{B}{\omega}\right) + \frac{B^2}{\omega^4} Q_\pi'' \left(\frac{B}{\omega}\right)\\
    &= \mu_\pi''(\omega) + \frac{2N}{\omega^2}Q_\pi' \left(N\right) + \frac{N^2}{\omega^2} Q_\pi'' \left(N\right) \nonumber\\
    &= \mu_\pi''(\omega) + \frac{1}{\omega^2} \frac{d}{dN}\left[ N^2 Q_\pi'(N) \right] \nonumber
\end{align}

\begin{lemma}\label{lemma:bounded_limit}
    If $h: [a, \infty) \to \mathbb{R}$ is differentiable, increasing, concave, and bounded above, then $\lim_{x\to \infty} xh'(x) = 0$.
\end{lemma}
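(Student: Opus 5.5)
We prove the claim with the standard concavity sandwich. Since $h$ is concave and differentiable, $h'$ is non-increasing. Since $h$ is increasing, $h'(x) \geq 0$ for all $x \geq a$. Because $h$ is increasing and bounded above, the limit $L \triangleq \lim_{x\to\infty} h(x)$ exists and is finite. It therefore suffices to bound $x h'(x)$ above by a quantity that vanishes as $x \to \infty$; the lower bound $x h'(x) \geq 0$ holds for $x \geq a$ with $x>0$.

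The key step is to bound $h'(x)$ by a difference quotient over the interval $[x/2, x]$. Take $x \geq 2\max(a,0)$, so that $x/2 \geq a$ and $x/2 \geq 0$. By the mean value theorem there is $\xi \in (x/2, x)$ with
\begin{equation*}
h(x) - h(x/2) = \tfrac{x}{2}\, h'(\xi).
\end{equation*}
Monotonicity of $h'$ gives $h'(\xi) \geq h'(x)$. Rearranging,
\begin{equation*}
0 \leq x\, h'(x) \leq 2\bigl(h(x) - h(x/2)\bigr).
\end{equation*}
Alternatively, the same inequality follows directly from the tangent-line inequality for concave functions, $h(x/2) \leq h(x) - \tfrac{x}{2} h'(x)$.

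As $x \to \infty$, both $h(x)$ and $h(x/2)$ converge to $L$. Hence the right-hand side tends to $0$, and the squeeze gives $\lim_{x\to\infty} x h'(x) = 0$.

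The only delicate point is choosing a comparison interval whose length is proportional to $x$. A fixed-length interval $[x-1, x]$ would only show $h'(x) \to 0$, which is too weak. Using $[x/2, x]$ supplies exactly the factor of $x$ needed, and the only care required is ensuring that $x/2$ lies in the domain $[a,\infty)$ and is nonnegative.
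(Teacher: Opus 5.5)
Your proof is correct and takes essentially the same route as the paper: both bound $h'$ at the right endpoint of an interval whose length is proportional to $x$ by the increment of $h$ over that interval, using that $h'$ is non-increasing, and then let that increment vanish because $h$ has a finite limit. The only difference is cosmetic: the paper uses $[x,2x]$ and the integral bound $\int_x^{2x} h'(t)\,dt \geq x h'(2x)$, while you use $[x/2,x]$ with the mean value theorem, which avoids relying on integrability of $h'$ and states the lower bound $x h'(x)\geq 0$ and the domain condition explicitly.
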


\begin{proof}
    Since $h$ is increasing and bounded above, there exists $l = \lim_{x\to \infty} h(x)$. Thus $\lim_{x\to \infty} h(2x) - h(x) = l - l = 0$. The concavity of $h$ implies that $h'$ is non-increasing so for $t \in [x, 2x]$, $h'(t) \geq h'(2x)$. Therefore $h(2x) - h(x) = \int_x^{2x} h'(t) dt \geq \int_x^{2x} h'(2x) dt = xh'(2x)$. It follows that $\lim_{x\to \infty} xh'(2x) = 0$.
\end{proof}

\subsection{Proof of Theorem~\ref{thm:monotonicity_B}}

\begin{proof}
We show that $Z_B(\pi, \omega)$ exhibits increasing differences in $(\omega, B)$. From \eqref{eq:Z_first_deriv}, we have:
$$\frac{\partial Z_B}{\partial \omega}(\pi, \omega) = \mu_\pi'(\omega) - \frac{B}{\omega^2} Q_\pi' \left(\frac{B}{\omega}\right)$$
Letting $N = B/\omega$, differentiating with respect to $B$ gives:
\begin{align*}
    \frac{\partial^2 Z_B}{\partial \omega  \partial B}(\pi, \omega) &= -\frac{1}{\omega^2} Q_\pi'\left(\frac{B}{\omega}\right) - \frac{B}{\omega^3} Q_\pi''\left(\frac{B}{\omega}\right) \\
    &= -\frac{1}{\omega^2}\left( Q_\pi'(N) + N Q_\pi''(N) \right)\\
    &= -\frac{1}{\omega^2} \frac{\partial}{\partial N}\left( N Q_\pi'(N) \right),
\end{align*}
where the last equality uses $\frac{d}{dN}[N Q_\pi'(N)] = Q_\pi'(N) + N Q_\pi''(N)$.

Since $N \mapsto N Q_\pi'(N)$ is non-increasing on $[1, \infty)$, its derivative is non-positive so $\frac{\partial^2 Z_B}{\partial \omega  \partial B}(\pi, \omega) \geq 0$ on $\{(\omega, B) : c_0 \leq \omega \leq B\}$.
Therefore, for any $B_1 \leq B_2$ and any $c_0 \leq \omega_L \leq \omega_H \leq B_1$,
\begin{equation*}
    Z_{B_2}(\pi, \omega_H) - Z_{B_2}(\pi, \omega_L) \geq Z_{B_1}(\pi, \omega_H) - Z_{B_1}(\pi, \omega_L).
\end{equation*}
By Topkis's monotone comparative statics theorem, $\Omega^\star_\pi(B) = \arg\max_{\omega \in [c_0, B]} Z_B(\pi, \omega)$ is non-decreasing in $B$ in the strong set order, so both $\underline{\omega}_\pi(B)$ and $\overline{\omega}_\pi(B)$ are non-decreasing in $B$.

Further assume that $N \mapsto N^2 Q_\pi'(N)$ is strictly decreasing on $[1, \infty)$. Let $G(N) \triangleq N^2 Q_\pi'(N)$, so $G'(N) \leq 0$. Using \eqref{eq:Z_second_deriv}, we have:
\begin{align*}
    \frac{\partial^2 Z_B}{\partial \omega^2}(\pi, \omega) &= \mu_\pi''(\omega) + \frac{1}{\omega^2} \frac{d}{dN}\left[ N^2 Q_\pi'(N) \right]\\
    &= \mu_\pi''(\omega) + \frac{1}{\omega^2} G'\left(\frac{B}{\omega}\right)    
\end{align*}
Since $\mu_\pi$ is concave and $G$ is strictly decreasing, it follows that $\frac{\partial^2 Z_B}{\partial \omega^2}(\pi, \omega) < 0$ on $[c_0, B]$. Thus $Z_B(\pi, \cdot)$ is strictly concave on $[c_0, B]$. It follows that $Z_B(\pi, \cdot)$ has a unique maximizer on the compact interval $[c_0, B]$. Letting $\omega^\star(B)$ be that maximizer, it holds $\Omega^*_\pi(B) = \{\omega^\star(B)\}$.

Since $\Omega^*_\pi(B)$ is a singleton, the non-decreasing property of $\underline{\omega}_\pi(B)$ and $\overline{\omega}_\pi(B)$ implies that $\omega^\star(B)$ is non-decreasing in $B$.
\end{proof}

\subsection{Proof of Theorem~\ref{thm:interior}}
\begin{proof}
    The uniform boundedness of $Z_B$ for all $B$ implies that $\mu_\pi$ and $Q_\pi$ are both bounded above. Lemma~\ref{lemma:bounded_limit} gives:
    \begin{align*}
        \lim_{\omega \to \infty} \omega\mu'(\omega) &= 0\\
        \lim_{N \to \infty} NQ'(N) &= 0
    \end{align*}
    First fix $\omega = c_0$. From~\eqref{eq:Z_first_deriv}, with $N_0 = B/c_0$:
    $$\frac{\partial Z_B}{\partial \omega} (\pi, c_0) = \mu_\pi'(c_0) - \frac{N_0}{c_0} Q_\pi'(N_0).$$
    Since $\mu_\pi'(c_0) > 0$ is fixed and $N_0 Q_\pi'(N_0) \to 0$ as $B \to \infty$, there exists $\bar{B}_1$ such that $\frac{\partial Z_B}{\partial \omega}(\pi, c_0) > 0$ for all $B \geq \bar{B}_1$. By continuity of $\frac{\partial Z_B}{\partial \omega}$ with respect to $\omega$, $Z_B$ is strictly increasing on an interval $[c_0, c_0+\epsilon]$ for some $\epsilon > 0$. Thus $c_0$ is not a local maximizer and $c_0 \notin \Omega^*_\pi(B)$.

    Now fix $\omega = B$. Again from~\eqref{eq:Z_first_deriv}, with $N_B = 1$,
    \begin{align*}
        \frac{\partial Z_B}{\partial \omega}(\pi, B) &= \mu_\pi'(B) - \frac{N_B}{B} Q_\pi'(N_B)\\
        &= \frac{1}{B} \left[ B \mu_\pi'(B) - Q_\pi'(N_B) \right]
    \end{align*}
    Since $B \mu_\pi'(B) \to 0$ as $B \to \infty$ and $Q_\pi'(1) > 0$ from the strict monotonicity of $Q_\pi$, we have that there exists $\bar{B}_2$ such that $\frac{\partial Z_B}{\partial \omega}(\pi, B) < 0$ for all $B \geq \bar{B}_2$. By continuity, $Z_B$ is strictly decreasing on an interval $[B- \epsilon, B]$ for some $\epsilon > 0$. Thus, for any $B \geq \bar{B}_2$, $\omega = B$ is not a local maximizer and $B \notin \Omega^*_\pi(B)$.

    Setting $\bar{B} = \max(\bar{B}_1, \bar{B}_2)$ gives $\Omega^*_\pi(B) \subseteq (c_0, B)$ for all $B \geq \bar{B}$.
\end{proof}

\subsection{Proof of Theorem~\ref{thm:monotonicity_mu_prime}}

\begin{proof}
The per-run mean fitness for policy $\pi_i$ at budget $\omega$ can be written as:
\begin{equation*}
    \mu_{\pi_i}(\omega) = \mu_{\pi_i}(c_0) + \int_{c_0}^{\omega}\mu_{\pi_i}'(x) dx.
\end{equation*}
By hypothesis, $\mu_{\pi_2}'(x) \geq \mu_{\pi_1}'(x)$ for all $x \in [c_0, B]$. Integrating from $c_0$ to $\omega$ yields:
\begin{equation*}
    \int_{c_0}^{\omega}\mu_{\pi_2}'(x) dx \geq \int_{c_0}^{\omega}\mu_{\pi_1}'(x) dx.
\end{equation*}
Adding $\mu_{\pi_2}(c_0) = \mu_{\pi_1}(c_0)$ to both sides gives:
\begin{equation*}
    \mu_{\pi_2}(\omega) \geq \mu_{\pi_1}(\omega) \qquad \forall \omega \in [c_0, B].
\end{equation*}
Since both policies share the order-statistic term $Q_\pi(B/\omega)$, adding it to both sides preserves the inequality:
\begin{equation*}
    Z_B(\pi_2, \omega) = \mu_{\pi_2}(\omega) + Q_\pi\left(\frac{B}{\omega}\right) \geq \mu_{\pi_1}(\omega) + Q_\pi\left(\frac{B}{\omega}\right) = Z_B(\pi_1, \omega),
\end{equation*}
which proves the first result.

Using \eqref{eq:Z_first_deriv}, and since both policies share $Q_\pi$, the difference of marginal returns reduces to:
\begin{equation*}
    \frac{\partial Z_B}{\partial \omega}(\pi_2, \omega) - \frac{\partial Z_B}{\partial \omega}(\pi_1, \omega) = \mu_{\pi_2}'(\omega) - \mu_{\pi_1}'(\omega) \geq 0,
\end{equation*}
so $Z_B(\pi_2, \omega) - Z_B(\pi_1, \omega)$ is non-decreasing in $\omega$. Equivalently, for any $\omega_L \leq \omega_H$ in $[c_0, B]$,
\begin{equation} \label{eq:incr_diff_mu}
    Z_B(\pi_2, \omega_H) - Z_B(\pi_2, \omega_L) \geq Z_B(\pi_1, \omega_H) - Z_B(\pi_1, \omega_L).
\end{equation}

Assume by contradiction that $\overline{\omega}_{\pi_2}(B) < \overline{\omega}_{\pi_1}(B)$. Apply~\eqref{eq:incr_diff_mu} with $\omega_L = \overline{\omega}_{\pi_2}(B)$ and $\omega_H = \overline{\omega}_{\pi_1}(B)$:
\begin{equation*}
    Z_B(\pi_2, \overline{\omega}_{\pi_1}(B)) - Z_B(\pi_2, \overline{\omega}_{\pi_2}(B)) \geq Z_B(\pi_1, \overline{\omega}_{\pi_1}(B)) - Z_B(\pi_1, \overline{\omega}_{\pi_2}(B)).
\end{equation*}
Since $\overline{\omega}_{\pi_1}(B)$ is a maximizer of $Z_B(\pi_1, \cdot)$, the right-hand side is non-negative, and it follows that:
\begin{equation*}
    Z_B(\pi_2, \overline{\omega}_{\pi_1}(B)) \geq Z_B(\pi_2, \overline{\omega}_{\pi_2}(B)).
\end{equation*}
Then either the equality holds, in which case $\overline{\omega}_{\pi_1}(B)$ is also a maximizer of $Z_B(\pi_2, \cdot)$ contradicting that $\overline{\omega}_{\pi_2}(B)$ is the larger maximizer of $Z_B(\pi_2, \cdot)$, or the inequality is strict, in which case $\overline{\omega}_{\pi_2}(B)$ is not a maximizer of $Z_B(\pi_2, \cdot)$, contradicting its definition. Both cases yield a contradiction, so $\overline{\omega}_{\pi_2}(B) \geq \overline{\omega}_{\pi_1}(B)$.

With a dual argument, assume by contradiction that $\underline{\omega}_{\pi_2}(B) < \underline{\omega}_{\pi_1}(B)$. Apply~\eqref{eq:incr_diff_mu} with $\omega_L = \underline{\omega}_{\pi_2}(B)$ and $\omega_H = \underline{\omega}_{\pi_1}(B)$:
\begin{equation*}
    Z_B(\pi_2, \underline{\omega}_{\pi_1}(B)) - Z_B(\pi_2, \underline{\omega}_{\pi_2}(B)) \geq Z_B(\pi_1, \underline{\omega}_{\pi_1}(B)) - Z_B(\pi_1, \underline{\omega}_{\pi_2}(B)).
\end{equation*}
Since $\underline{\omega}_{\pi_2}(B)$ is a maximizer of $Z_B(\pi_2, \cdot)$, the left-hand side is non-positive, hence:
\begin{equation*}
    Z_B(\pi_1, \underline{\omega}_{\pi_1}(B)) \leq Z_B(\pi_1, \underline{\omega}_{\pi_2}(B)).
\end{equation*}
Either the equality holds, in which case $\underline{\omega}_{\pi_2}(B)$ is also a maximizer of $Z_B(\pi_1, \cdot)$ contradicting that $\underline{\omega}_{\pi_1}(B)$ is the smallest maximizer of $Z_B(\pi_1, \cdot)$, or the inequality is strict, in which case $\underline{\omega}_{\pi_1}(B)$ is not a maximizer of $Z_B(\pi_1, \cdot)$, contradicting its definition. Both cases yield a contradiction, so $\underline{\omega}_{\pi_2}(B) \geq \underline{\omega}_{\pi_1}(B)$.
\end{proof}

\section{Implementation Details \& Additional Numerical Results} \label{apx:implementation_details}

\subsection{Implementation Details}

We use DeepSeek-V3.2 \citep{liu2025deepseek} for every query: initial algorithm generation, full-algorithm update, correction generation, and summary update. The cost of a run is its total number of input and output tokens. The temperature is fixed at the default value of 1.0. The number of evaluations per correction is capped at $n_\text{eval} = 16$ for both graph-based variants. For all problems, the runtime limit of algorithms is set to 60 seconds per instance. We use a Random Forest Regression model as the surrogate. Runs are parallelized on an AMD EPYC 9734 processor. 

\paragraph{DAG Representation of the Algorithmic Space.}
We represent every algorithm as a source-to-sink path in a directed acyclic graph (DAG) whose nodes are individual lines of code, so that each source-to-sink path maps to exactly one algorithm. Figure~\ref{fig:codetree_diagram} depicts such DAG, in which linear segments of the graph are collapsed into blocks for readability. Each correction augments the DAG by adding sub-paths that do not introduce cycles. The DAG thus acts as a compact representation of a combinatorial family of algorithms: $k$ independent corrections yield up to $2^{k}$ distinct paths, each corresponding to a syntactically valid algorithm. The DAG is initialized by connecting the source to the sink with a single edge. This edge encapsulates the complete source code of the initial algorithm, $a_0$. In Figure~\ref{fig:codetree_diagram}, the two corrections, green and purple, already induce four algorithms, namely the paths $(1\text{-}2\text{-}3\text{-}4)$, $(1\text{-}5\text{-}2\text{-}3)$, $(1\text{-}6\text{-}4)$ and $(1\text{-}5\text{-}6)$. This structure is what makes the search tractable: we store a single DAG rather than an explicit list of variants. Furthermore, we can efficiently enumerate all algorithms that include a given correction by joining its endpoints to the source and the sink.

\begin{figure}[h!]
    \centering
    \includegraphics[width=0.85\textwidth]{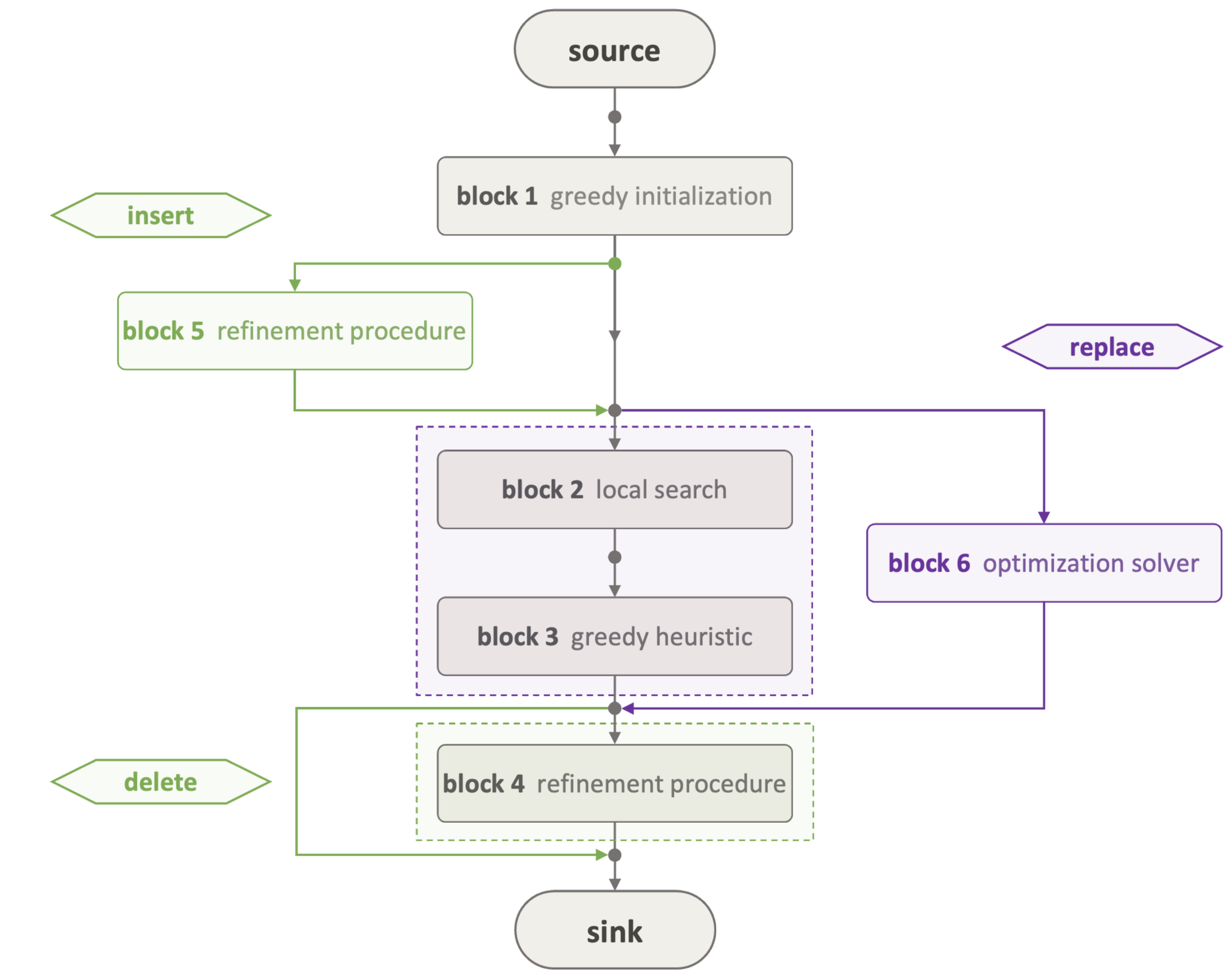}
    \caption{Augmentation of a DAG with two corrections. The green correction, ``Move the refinement procedure before the main optimization loop", consists of two edits: insert and delete. The purple correction, ``Replace the main optimization loop with Gurobi model" consists of a single replace edit.}
    \label{fig:codetree_diagram}
\end{figure}

\paragraph{Correction handling.} Recall that a correction captures a semantic modification of the reference algorithm, whereas an edit captures the underlying syntactic code modifications required to realize it. A single correction is therefore represented as a set of edits, each acting on a localized region of the code.

Corrections are produced by calling an LLM with a reference algorithm annotated with line numbers, and potentially additional context. The LLM returns a list of corrections. Each correction is composed of multiple edits characterized by (i) the type of edit ($\texttt{insert\_before}$, $\texttt{insert\_after}$, $\texttt{replace}$, $\texttt{delete}$), (ii) a couple $(n_{\text{start}}, n_{\text{end}})$, the line numbers of the first and last anchor lines, and (iii) a sequence of characters $\texttt{new\_lines}$, the code to insert or replace (ignored for the $\texttt{delete}$ operation).

Each edit is parsed as follows. First, the anchors are translated to existing nodes in the DAG based on the reference algorithm and anchor lines returned. Second, the $\texttt{new\_lines}$ are converted into a chain of nodes, which we then connect to the DAG via the first and last nodes of the anchors. A correction is then stored as a list of edits and a short natural-language description.

\paragraph{Surrogate-guided evaluation and interpretation.} Enumerating paths through the DAG is cheap, but evaluating a candidate algorithm end-to-end on the target task is not. We therefore couple the DAG with a surrogate model, a random forest that predicts the score of algorithms, trained on the candidates that have already been evaluated. The surrogate serves two distinct purposes.
\begin{enumerate}
    \item \emph{Frontier selection.} Among the exponentially many unevaluated paths, the surrogate provides a cheap ranking that lets us avoid spending compute on low-potential candidates and only consider the top-scoring ones for evaluation.
    \item \emph{Fast and robust identification of corrections' influence.} Shapley values distribute credit fairly among interacting corrections, but computing them exactly is exponential in the number of corrections. Leveraging the specific structure of trees, the methods of \cite{lundberg2018consistent} and \cite{yang2021fast} make this identification tractable.
\end{enumerate}

\subsection{Additional Numerical Results}

In Figure \ref{fig:three_contours}, we present the average performance of the baseline and both graph-based variants across TSP, LRP, and BRP. Each contour plot illustrates the performance of each method relative to varying token budgets and iterations per run. The results for BRP align with those observed for LRP: the vanilla graph-based approach consistently outperforms the baseline across all token budgets and iteration counts, and the context-guided graph-based variant achieves superior performance over the other two methods only under larger budgets.

\begin{figure}[h]
\captionsetup[subfigure]{justification=raggedright, singlelinecheck=false}
    \centering
    \begin{subfigure}{0.9\textwidth}
        \includegraphics[width=\textwidth]{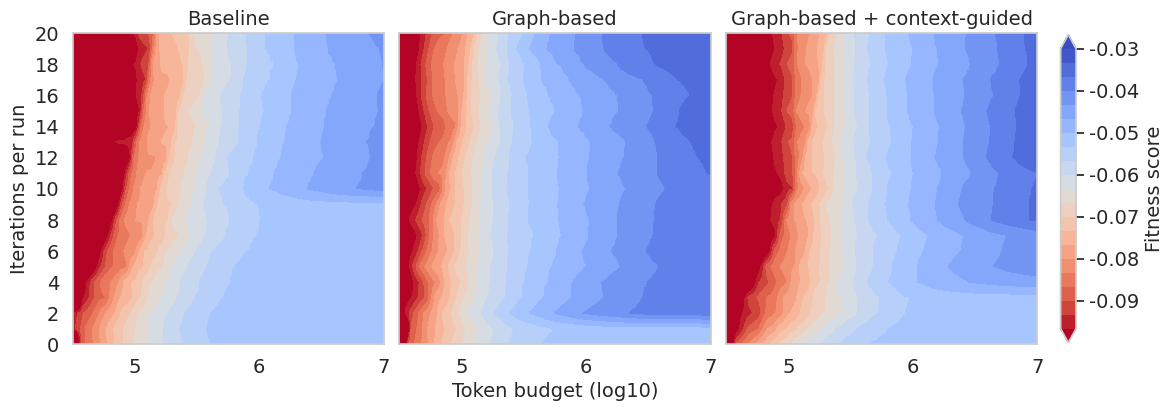}
    \end{subfigure}
    
    \begin{subfigure}{0.9\textwidth}
        \includegraphics[width=\textwidth]{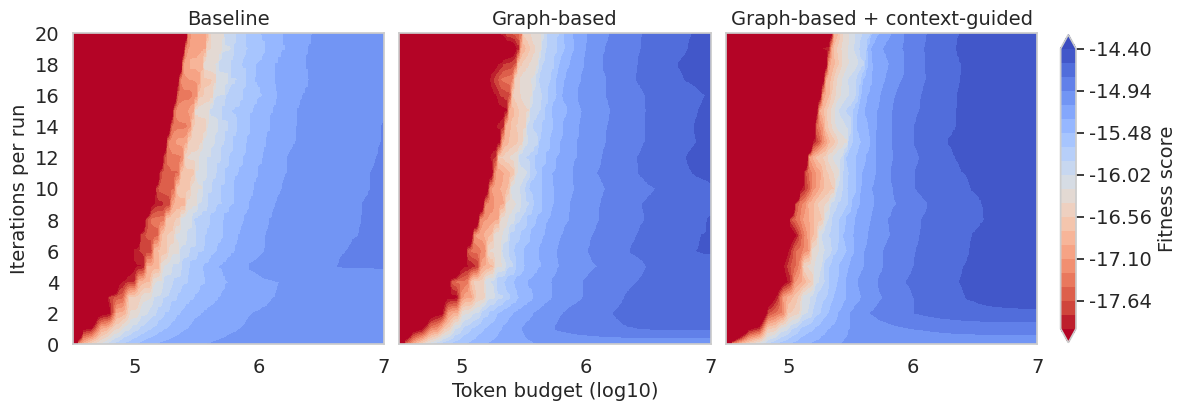}
    \end{subfigure}

    \begin{subfigure}{0.9\textwidth}
        \includegraphics[width=\textwidth]{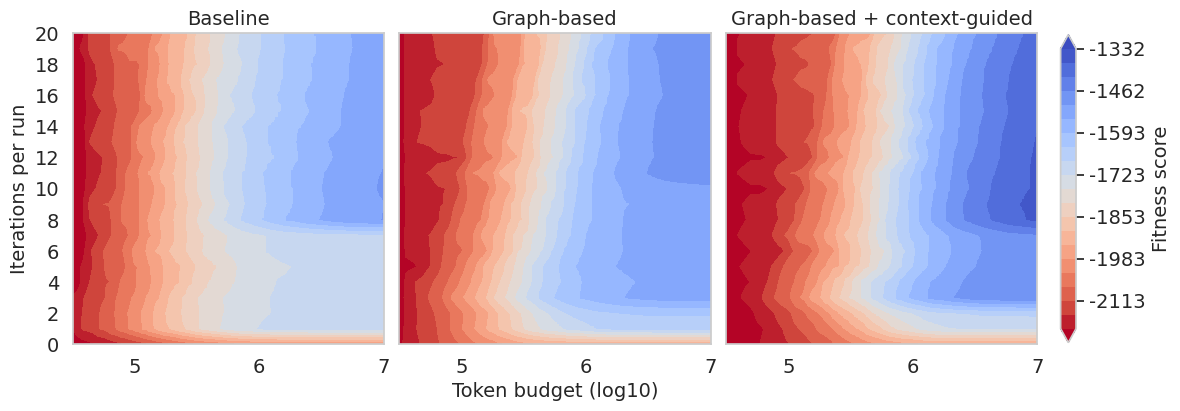}
    \end{subfigure}
    
    \caption{Contour plots of the Traveling Salesman Problem (top), Location Routing Problem (middle), and Bus Routing Problem (bottom).}
    \label{fig:three_contours}
\end{figure}

Figure \ref{fig:three_curves} reports the improvement rate (the proportion of iterations that strictly improve the incumbent's best fitness) for each problem.

\begin{figure}[ht]
    \centering
    \begin{subfigure}{0.48\textwidth}
        \centering
        \includegraphics[width=\textwidth]{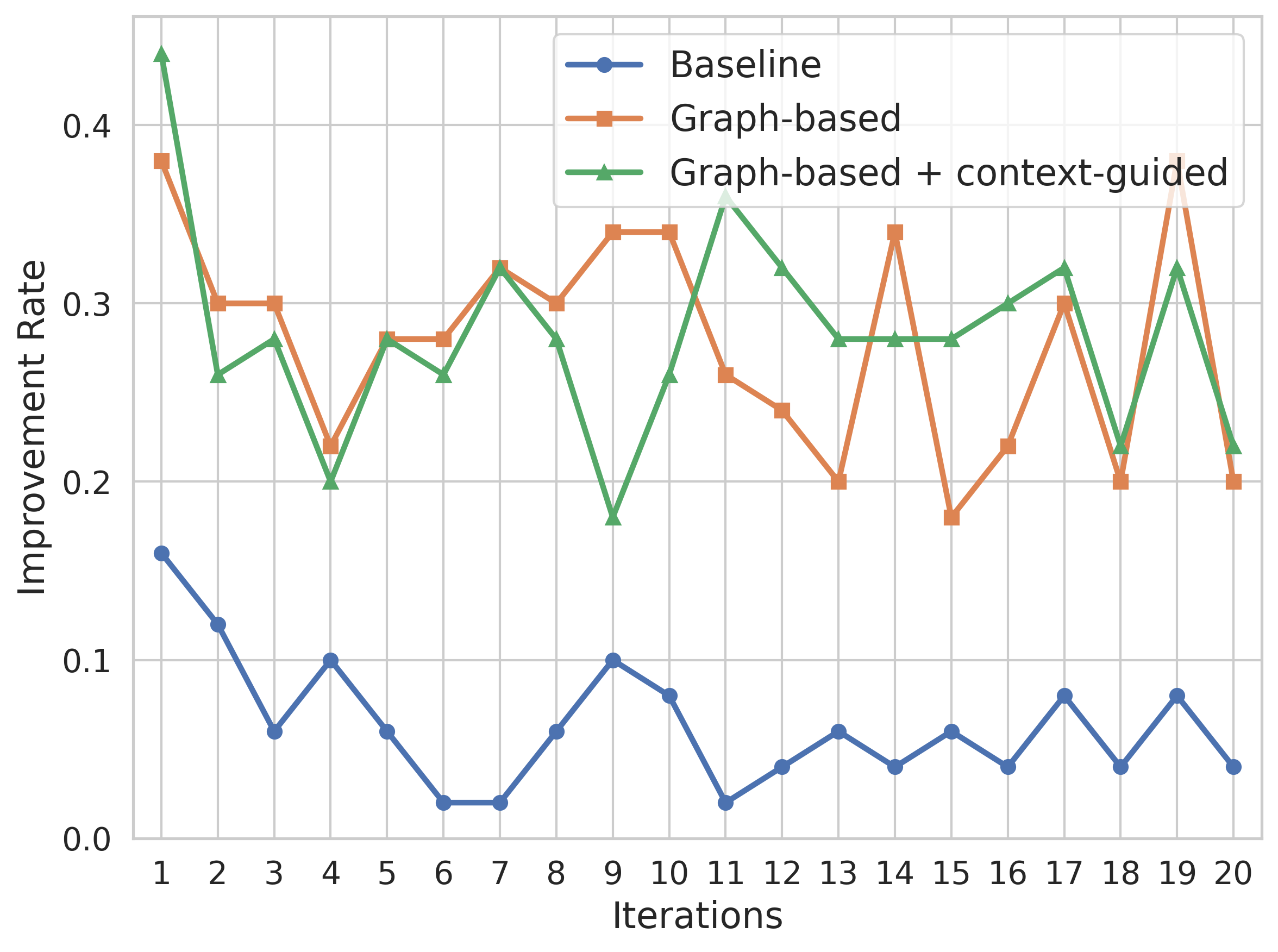}
        \caption{TSP}
    \end{subfigure}
    \hfill 
    \begin{subfigure}{0.48\textwidth}
        \centering
        \includegraphics[width=\textwidth]{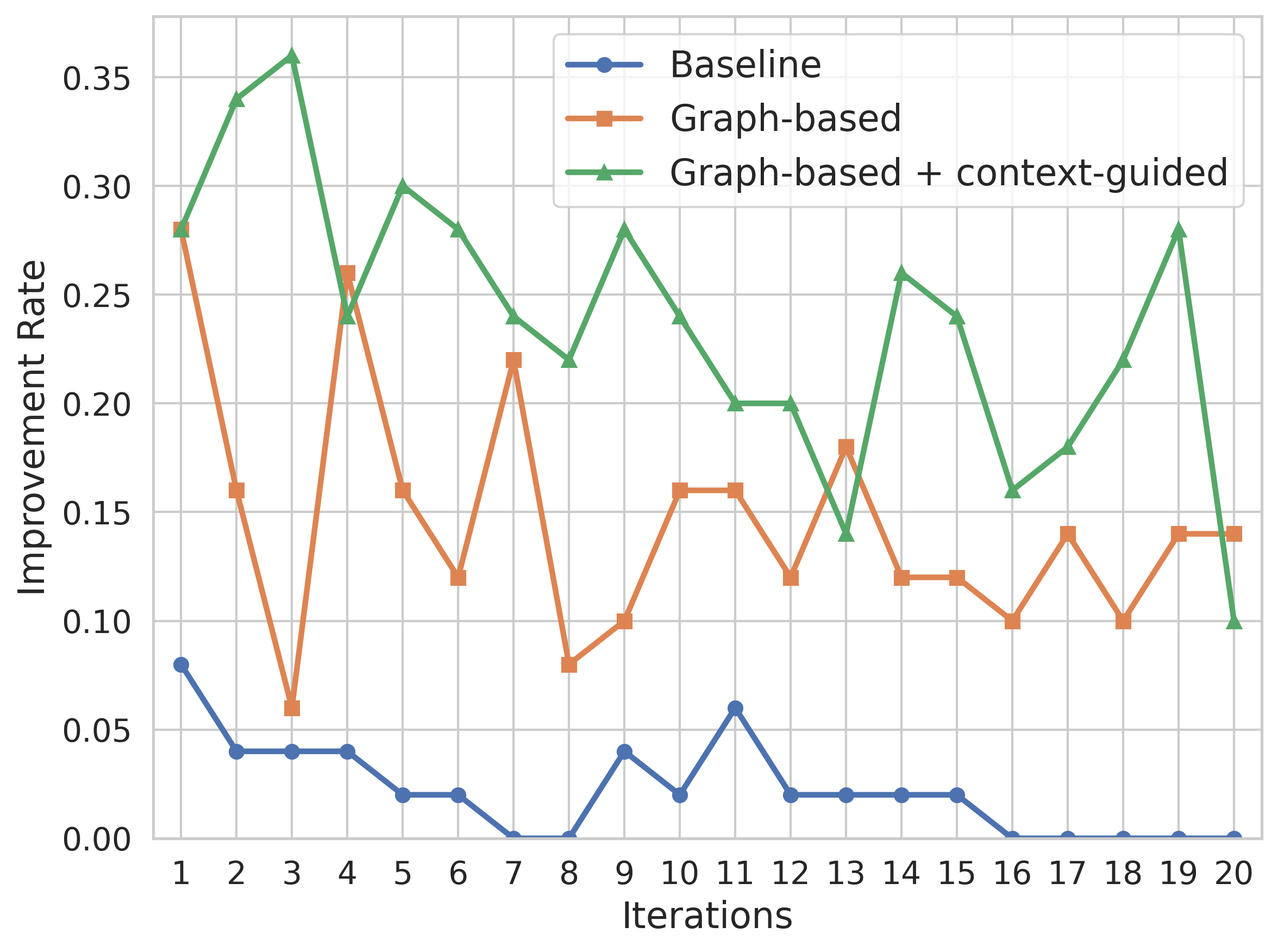}
        \caption{LRP}
    \end{subfigure}
    \begin{subfigure}{0.48\textwidth}
        \centering
        \includegraphics[width=\textwidth]{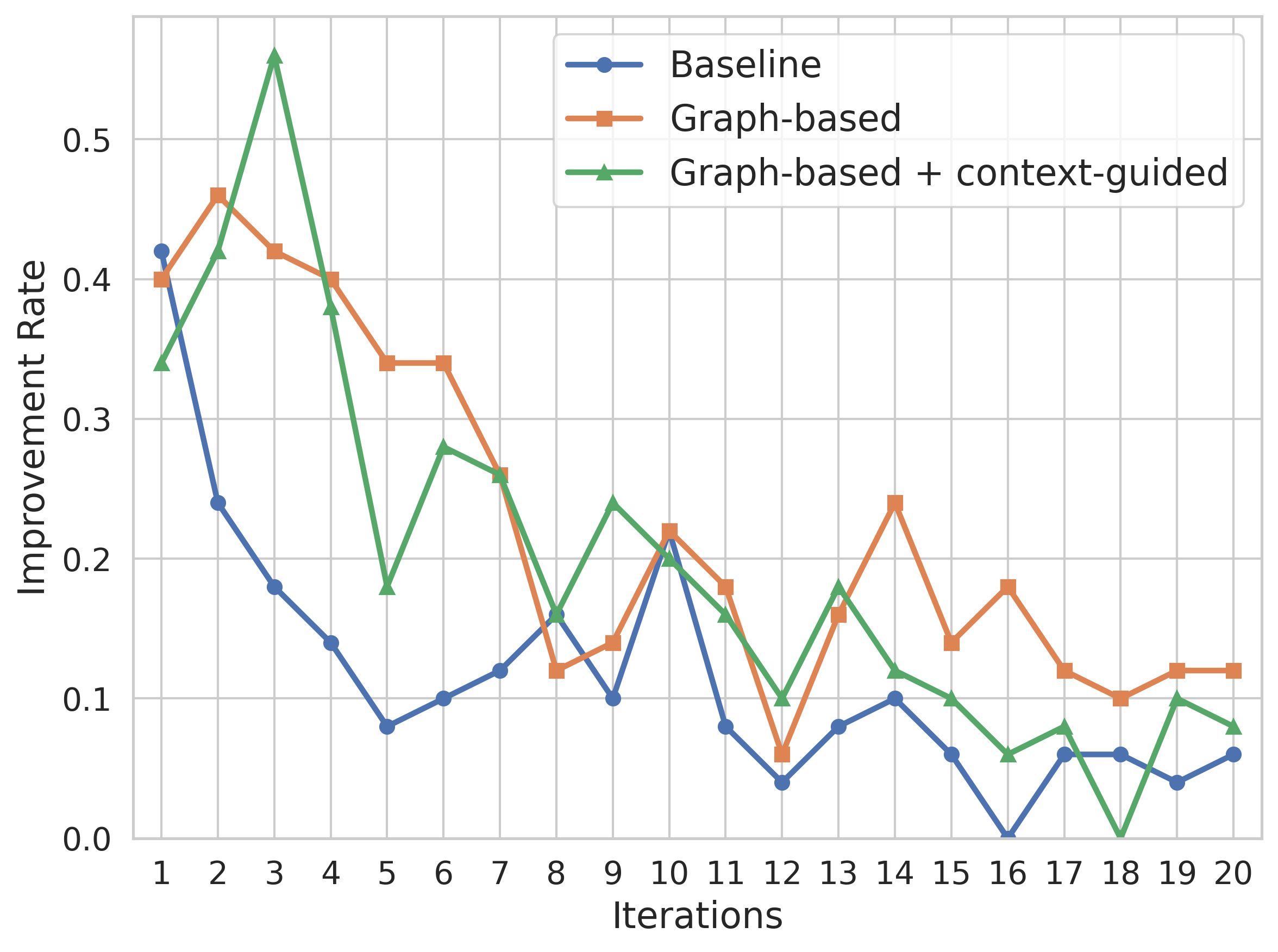}
        \caption{BRP}
    \end{subfigure}
    
    \caption{Proportion of iterations that strictly improve the
incumbent’s best fitness for each problem.}
    \label{fig:three_curves}
\end{figure}

\newpage

\clearpage
\section{Problem Specifications} \label{apx:problem_specifications}

For TSP, the fitness is the negative optimality gap averaged over 5 instances of TSPLIB \citep{reinelt1991tsplib}: \texttt{d198}, \texttt{d2103}, \texttt{fl3795}, \texttt{nrw1379}, \texttt{pcb1173}. For LRP and BRP, the fitness is the negative objective value averaged over 5 random instances of similar size.

\subsection{Traveling Salesman Problem}

\begin{promptbox}[blue]{Description}
Task:
* Given a set of nodes with their coordinates, find the shortest route that visits each node once and returns to the starting node.
* For any two vertices u, v in C, the cost c\_uv is defined as their Euclidean distance.
* The route must be closed i.e. route[0] = route[-1].
\end{promptbox}
\begin{promptbox}[blue]{Input template}
instance = {
    'customers': [  # list[dict]
        {
            'id': None,     # str: unique customer identifier (different from depot ids)
            'position': (None, None)    # tuple(float, float): 2D coordinates (x, y) of the customer
        },
        # ... more customers
    ],
    'time_limit': None  # float: time limit in seconds
}
\end{promptbox}
\begin{promptbox}[blue]{Output template}
{
  "route": None,  # list[str] e.g. ["c5", "c1", "c2", "d5"]
  "objective_value": None   # float: total cost
}
\end{promptbox}

\subsection{Location Routing Problem}

\begin{promptbox}[blue]{Description}
Given:
* A set of customers C with 2-dimensional coordinates, and demand d_i.
* A set of potential depot sites D with 2-dimensional coordinates and a fixed opening cost p_j > 0.
* A homogeneous fleet of vehicles, each with capacity Q (unlimited number of vehicles available at each open depot).
* For any two vertices u, v, the cost of traveling edge uv is c_uv is defined as their Euclidean distance.

Task:
* Choose a subset of depots D_open \subseteq D to open.
* Design a set of vehicle routes, each route starts at an open depot, visits a non-empty sequence of customers, and returns at the same depot.
* Each customer must be reached once by exactly one vehicle.
* The total demand on any single route cannot exceed Q.
* Minimize the total cost = sum of fixed opening costs of depots in D_open + sum of travel costs over all routes.

Notes:
* A feasible solution is guaranteed to exist.
\end{promptbox}

\begin{promptbox}[blue]{Input template}
instance = {
    'depots': [  # list[dict] - set D of potential depot sites
        {
            'id': None,               # str - unique depot identifier
            'opening_cost': None,     # float >= 0 - fixed cost p_j to open this depot
            'position': (None, None)  # tuple(float, float) - 2D coordinates (x, y) of the depot
        },
        # ... more depots
    ],
    'customers': [  # list[dict] - set C of customers
        {
            'id': None,             # str - unique customer identifier (different from depot ids)
            'demand': None,         # float > 0 - customer demand
            'position': (None, None)  # tuple(float, float) - 2D coordinates (x, y) of the customer
        },
        # ... more customers
    ],
    'vehicle_capacity': None,   # float > 0 - capacity Q of each (homogeneous) vehicle,
    'time_limit': None          # float: time limit in seconds
}

\end{promptbox}

\begin{promptbox}[blue]{Output template}
{
  "open_depots": [  # list[str] - ids of depots j in D that are opened
    # e.g. "d0", "d1", "d5"
  ],
  "routes": [
    ["d2", "c1", "c2", "d2"],  # example route: depot d2 -> customers c1, c2 -> depot d2
    # ... more routes
  ],
  "objective_value": None      # float - total cost = total_opening_cost + total_travel_cost
}

\end{promptbox}

\subsection{Bus Routing Problem}

\begin{promptbox}[blue]{Description}
Task:
* Given a set of students and a single, unique school, the decision maker must simultaneously determine the continuous coordinate locations for bus stops, assign students to those stops, and design bus routes, where each route may visit an ordered sequence of multiple stops before terminating at the school.
* There are no predefined candidate locations for the stops; they can be placed anywhere in the 2D coordinate space.
* The objective is to minimize the total travel time for all students: \sum_{student} (d_walk(student) / v_walk + d_bus(student) / v_bus).
* d_walk is defined as the Euclidean distance from the student's location to their assigned bus stop.
* d_bus is defined as the total Euclidean distance the bus travels from the moment a student boards at their assigned stop, through any subsequent stops on that same route, until it finally arrives at the school.
* The total number of bus routes generated must not exceed `bus_limit`.
\end{promptbox}

\begin{promptbox}[blue]{Input template}
instance = {
    'school': {
        'id': None,                 # str: unique school identifier
        'position': (None, None)    # tuple(float, float): 2D coordinates (x, y)
    },
    'students': [                   # list[dict]
        {
            'id': None,             # str: unique student identifier
            'position': (None, None)# tuple(float, float): 2D coordinates (x, y)
        },
        # ... more students
    ],
    'v_walk': None,                 # float: student walking speed
    'v_bus': None,                  # float: bus traveling speed
    'bus_limit': None,              # int: maximum number of buses available to use
    'time_limit': None              # float: time limit in seconds
}
\end{promptbox}

\begin{promptbox}[blue]{Output template}
{
  "stops": None,                # list[dict]: generated stops, e.g., [{"id": "stop_0", "position": (x, y)}, ...]
  "student_assignments": None,  # dict[str, str]: mapping of student ID to a generated stop ID
  "routes": None,               # list[list[str]]: list of routes, where each route is a sequence of generated stop IDs ending at the school ID
  "objective_value": None       # float: total travel time (walking + bus)
}
\end{promptbox}

\section{Prompts}  \label{apx:prompts}

\begin{promptbox}[gray]{Initial algorithm generation}
You are an expert algorithms engineer. Write a function solve(input) -> output that solves the following problem:
{problem_description}
--------------------

Input format:
{input_template}

--------------------

Output format:
{output_template}

--------------------

In the `thoughts` field, explain your approach in a few steps using your own terms before coding.
In the `code` field, write the complete, executable and well indented code of the 'solve' function, including necessary imports.
The code should be modular and readable, include minimal comments.
Ensure that the solve returns the *best possible solution making full use of the time budget*.
In the `description` field, describe the algorithmic approach (abstracted from the problem) as a sequence of 10-word max phrases, separated by semicolons. The description should not exceed 50 words.
\end{promptbox}

\begin{promptbox}[gray]{Full algorithm upgrade}
You are an expert algorithms engineer. Consider the following problem:
{problem_description}

--------------------

Input format:
{input_template}

--------------------

Output format:
{output_template}

--------------------

You are given the following code:
---BEGIN_CODE---
{code}
---END_CODE---
The description of the algorithm is the following:
{description}

--------------------

Your task is to implement a new algorithm that improves the performance so that it returns the *best possible solution making full use of the time budget*.
Explain your approach in a few steps using your own terms before coding.
Write the complete, executable and well indented code of the 'solve' function, including necessary imports.
The code should be modular and readable, include minimal comments.
Give the full description of the algorithmic approach (abstracted from the problem) as a sequence of 10-word max phrases, separated by semicolons. The description should not exceed 50 words.
\end{promptbox}

\begin{promptbox}[gray]{Correction generation}
You are an expert algorithms engineer. Consider the following problem:
{problem_description}

--------------------

Input format:
{input_template}

--------------------

Output format:
{output_template}

--------------------

You are given the following algorithm:
---BEGIN_ALGORITHM---
{lined_code}
---END_ALGORITHM---

--------------------

{summary} # context-guided only

--------------------

Your task is to propose multiple performance corrections to improve the fitness score of the algorithm.
Corrections are independent and self-consistent (i.e. does not rely on other corrections to execute).
Each correction consists of a short description and one or more edits (each edit is a single operation applied to one contiguous block of code).
If a correction introduces a new mechanism (e.g., variable/function/data structure), it must also include the edit(s) that integrate it; do not split prerequisites into separate corrections.
For example, an edit may introduce a new variable, and a later edit may replace a contiguous block to make the algorithm use it.
For each edit, provide an anchor by giving the numbers of the first (inclusive) and last line (exclusive) of the affected contiguous block and 'new_lines' (code to introduce).
Assume all edits are applied simultaneously. Base every anchor strictly on the unmodified input code.
Available edit operations:
- insert_after: Adds new_lines after the anchor
- insert_before: Adds new_lines before the anchor
- replace: Replaces the anchor with new_lines
- delete: Removes the anchor (new_lines should be ignored or empty)
For insert_after and insert_before, the anchor must be a single line (first == last).
For replace and delete, the anchor must be a contiguous range (first <= last).
The indentation of every line in new_lines must match the surrounding code so that, when inserted at the anchor location, the file remains syntactically valid and compiles without indentation errors.
Each line must end with a linebreak.
new_lines must not include docstrings or line numbers (#<num> lines are not part of the code and must never appear in new_lines).  
\end{promptbox}

\begin{promptbox}[gray]{Summary update}
Consider the following problem:
{problem_description}

--------------------

Input format:
{input_template}

--------------------

Output format:
{output_template}

--------------------

You are given the new corrections since the previous summary (\Delta scores if correction is valid, failure reasons if not).
* \Delta>0 means the correction improved the algorithm.
* \Delta<0 means the correction degraded the algorithm.
* \Delta=0 means the correction had no effect on the final score. Do not treat \Delta=0 as a success or a failure.

{corrections_and_delta}

--------------------

Here is the summary of the corrections of previous iterations.
{summary}

--------------------

Task 1 (`meta_analysis` field): write a meta-analysis explaining why specific traits succeeded or failed in this environment, and what are the key drivers of performance.
Task 2 (`summary` field): write the full, updated summary that will guide the correction-generating LLM toward effective edits.
Keep the previous summary as the baseline and apply the smallest set of edits needed to reflect the new corrections.
Only remove prior information if new evidence refutes it, or if the word cap leaves no room, in which case drop the entry with the weakest evidence first.
Critical rules for high-signal summary:
1. Strict grounding: only extract insights explicitly present in the provided correction history. Do not include general ML theory or textbook advice.
2. Conciseness: the summary should only include the most informative and actionable points.
3. No ambiguity: avoid vague terms.
4. Advisory tone: give strong guidelines rather than absolute rules, allowing the generator to revisit concepts if it can fix the underlying bug.
Keep the whole summary under 500 words.
Strict formatting of the summary (if the provided history contains no supporting data for a category, output '* None' under that header):
### Strongly Discouraged
* [Concept] - [concrete reason it failed]
### Highly Encouraged
* [Concept] - [concrete reason it succeeded]
### Needs Careful Implementation
* [Concept] - [presumed bug or flaw]
\end{promptbox}


%


\end{document}